\crefname{section}{Section}{Sections}
\crefname{theorem}{Theorem}{Theorems}
\crefname{lemma}{Lemma}{Lemmas}
\crefname{equation}{Equation}{Equations}
\crefname{proposition}{Proposition}{Propositions}
\crefname{claim}{Claim}{Claims}
\crefname{appendix}{Appendix}{Appendices}
\crefname{algorithm}{Algorithm}{Algorithms}
\crefname{figure}{Figure}{Figures}
\crefname{table}{Table}{Tables}
\crefname{remark}{Remark}{Remarks}
\crefname{definition}{Def.}{Definitions}
\crefname{corollary}{Corollary}{Corollaries}
\definecolor{cite_color}{HTML}{114083}
\definecolor{link_color}{RGB}{0,102,102}
\definecolor{link_color}{RGB}{153, 0,0}  
\definecolor{url_color}{RGB}{153, 102,  0}
\definecolor{emp_color}{RGB}{0,0,255}
\DeclarePairedDelimiterX{\infdivx}[2]{(}{)}{%
  #1\;\delimsize\|\;#2%
  }
\newcommand{\infdiv}{\operatorname{KL}\infdivx}
\def \x{\mathbf{x}}
\def \y{\mathbf{y}}
\def \z{\mathbf{z}}
\def \xib{\boldsymbol{\xi}}
\def \iu{\mathrm{i}}
\def \ED{\mathrm{ED}}
\def \KL{\mathrm{KL}}
\def \data{\mathrm{data}}
\def \ebm{\mathrm{ebm}}
\def \e{\mathbf{e}}
\providecommand{\customgenericname}{}
\newcommand{\newcustomtheorem}[2]{%
  \newenvironment{#1}[1]
  {%
   \renewcommand\customgenericname{#2}%
   \renewcommand\theinnercustomgeneric{##1}%
   \innercustomgeneric
  }
  {\endinnercustomgeneric}
}
\DeclareMathOperator*{\argmin}{argmin}
\newtheorem{definition}{Definition}
\newcommand\EDB{\text{ED-Bern}\xspace}
\newcommand\EDG{\text{ED-Grid}\xspace}
\newcommand\num{\mathrm{num}}
\newcommand\cat{\mathrm{cat}}
\title{Energy-Based Modelling for Discrete and Mixed Data via Heat Equations on Structured Spaces}
\author{
Tobias Schröder \thanks{Equal contributions}\hspace{1.5mm}\thanks{Correspondence to: \texttt{t.schroeder21@imperial.ac.uk} \, and 
\,\texttt{z.ou22@imperial.ac.uk}}\\
Imperial College London\\
\And Zijing Ou$^\ast$\thanks{Code: \url{https://github.com/J-zin/discrete-energy-discrepancy}}\\
Imperial College London\AND
Yingzhen Li \\
Imperial College London\And 
Andrew Duncan\\
Imperial College London\\
The Alan Turing Institute
}
\begin{document}

\maketitle

\begin{abstract}
  Energy-based models (EBMs) offer a flexible framework for probabilistic modelling across various data domains. However, training EBMs on data in discrete or mixed state spaces poses significant challenges due to the lack of robust and fast sampling methods. In this work, we propose to train discrete EBMs with Energy Discrepancy, a loss function which only requires the evaluation of the energy function at data points and their perturbed counterparts, thus eliminating the need for Markov chain Monte Carlo. We introduce perturbations of the data distribution by simulating a diffusion process on the discrete state space endowed with a graph structure. This allows us to inform the choice of perturbation from the structure of the modelled discrete variable, while the continuous time parameter enables fine-grained control of the perturbation. Empirically, we demonstrate the efficacy of the proposed approaches in a wide range of applications, including the estimation of discrete densities with non-binary vocabulary and binary image modelling. Finally, we train EBMs on tabular data sets with applications in synthetic data generation and calibrated classification. 
\end{abstract}

\section{Introduction}
Discrete structures are intrinsic to most types of data such as text, graphs, and images. Estimating the data generating distribution $p_\data$ of discrete data sets with a probabilistic model can contribute greatly to downstream inference and generation tasks, and plays a key role in synthetic data generation of tabular, textual or network data \citep{raghunathan2021synthetic}. Energy-based models (EBMs) are probabilistic generative models of the form $p_\ebm \propto \exp(-U)$, where the flexible choice of the energy function $U$ allows great control in the modelling of different data structures. However, energy-based models are, by definition, unnormalised models and notoriously difficult to train due to the intractability of their normalisation, especially in discrete spaces.

Energy-based models are typically trained with the contrastive divergence (CD) algorithm \citep{hinton2002training} which performs approximate maximum likelihood estimation by approximating the gradient of the log-likelihood with Markov Chain Monte Carlo (MCMC) techniques. This method motivated rich research results on sampling from discrete distributions to enable fast and accurate estimation of energy-based models \citep{zanella2020informed,grathwohl2021oops,zhang2022langevin,sun2022path,sun2022optimalscaling,sun2023discrete}. However, the training of energy-based models with CD remains challenging as it relies on sufficiently fast mixing of Markov chains. Since accurate sampling from the EBM typically cannot be achieved, contrastive divergence lacks theoretical guarantees \citep{carreira2005contrastive} and leads to biased estimates of the energy landscape \citep{nijkamp2019learning}. For mixed data types, energy-based models have only been applied to combinations of numerical features with a single categorical label \citep{grathwohl2019your}.

The recently introduced Energy Discrepancy (ED) \citep{schroeder2023energy} is a new type of contrastive loss functional that, by definition, depends on neither gradients nor MCMC methods. Instead, the definition of ED only requires the evaluation of the energy function on positive and contrasting, negative samples which are generated by perturbing the data distribution. However, the work in \citet{schroeder2023energy} is currently limited to Gaussian perturbations on continuous spaces and does not explores strategies to choose perturbations on discrete spaces, especially when these discrete spaces exhibit some additional structure.

In this work, we propose a framework to train energy-based models with energy discrepancy on discrete data, making the following contributions: 1) We explore a method to define discrete diffusion processes on structured discrete spaces through a heat equation on  the underlying graph and investigate the effect of geometry and time parameter on the diffusion. 2) Based on the discrete diffusion process, we extend energy discrepancy to discrete spaces in a systematic way, thus introducing a MCMC-free method for the training of energy-based models that requires little tuning.
3) We extend our methodology to mixed state spaces and establish to the best of our knowledge the first robust training method of energy-based models on tabular data sets. We demonstrate promising performance on downstream tasks like synthetic data generation and calibrated prediction, thus unlocking a new tool for generative modelling on tabular data.
\section{Preliminaries}
Energy-based models (EBMs) are a parametric family of distributions $p_\theta$ defined as
\begin{align}
    p_\theta(\x) = \frac{\exp(-U_\theta(\x))}{Z_\theta}, \quad Z_\theta = \sum\nolimits_{\x \in \mathcal{X}} \exp(-U_\theta (\x)),
\end{align}
where $U_\theta$ is the energy function parameterised by $\theta$ and $Z_\theta$ denotes the normalisation constant.
Given a set of {\it i.i.d.} samples $\{\x^i\}_{i=1}^N$ from an unknown data distribution $p_\data (\x)$ we aim to learn an approximation $p_\theta(\x)$ of $p_\data(\x)$. The {\it de facto} standard approach for finding such $\theta$ is to minimise the negative log-likelihood of $p_\theta$ under the data distribution via gradient decent
\begin{align} \label{mle-objective}
    -\nabla_\theta \mathbb{E}_{p_\data (\x)} [\log p_\theta (\x)] = \mathbb{E}_{\x \sim p_\data}[\nabla_\theta U_\theta (\x)] - \mathbb{E}_{{\x}_- \sim p_\theta}[\nabla_\theta U_\theta ({\x}_-)].
\end{align}
The intuition behind this update is to decrease the energy of positive data samples $\x \sim p_\data(\x)$ and to increase the energy of negative samples ${\x}_- \sim p_\theta(\x)$. However, the exact computation of gradient in \eqref{mle-objective} is known to be NP-hard in general \citep{jerrum1993polynomial} and quickly becomes prohibitive even on relatively simple data sets.  Consequently, existing approaches resort to sampling from the model $p_\theta$ to approximate the gradient of log-likelihood via Monte Carlo estimation. In discrete settings, the most popular sampling methods include the locally informed sampler \citep{zanella2020informed}, Gibbs with gradients (GwG) \citep{grathwohl2021oops}, discrete Langevin \citep{zhang2022langevin}, and generative flow networks (GFlowNet) \citep{zhang2022generative}. Despite their established success in discrete energy-based modelling, these methods necessitate a trade-off that hampers scalability: running the sampler for an extended duration rapidly increases the cost of maximum likelihood training, while shorter sampler runs yield inaccurate approximations of the likelihood gradient and introduce biases into the learned energy.

Energy Discrepancy \citep{schroeder2023energy} is a recently proposed method to train energy-based models without the need for an extensive sampling process. Instead, it constructs negative samples by perturbing the data, thus bypassing the sampling step while still yielding a valid training objective. To elucidate, the energy discrepancy is formally defined as follows:

\begin{definition} [Energy Discrepancy]
    Let $p_{\mathrm{data}}$ be a positive density on a measure space ($\mathcal{X}$, $\mathrm d\x$)\footnotemark and let $q(\y|\x)$ be a conditional probability density. Define the \emph{contrastive potential} induced by $q$ as\footnotemark
    \begin{align} \label{definition-contrastive-potential}
        U_q (\y) := - \log \sum_{\x'\in\mathcal X} q(\y | \x') \exp(-U(\x'))
    \end{align}
    The \emph{energy discrepancy} between $p_{\mathrm{data}}$ and $U$ induced by $q$ is defined as
    \addtocounter{footnote}{-2}
    \stepcounter{footnote}\footnotetext{On discrete spaces $\mathrm d\x$ is assumed to be a counting measure. On continuous spaces $\mathcal X$, the appearing sums and expectations turn into integrals with respect to the Lebesgue measure}
    \stepcounter{footnote}\footnotetext{With a slight abuse of notations, we represent the contrastive potential induced by distribution $q$ as $U_q$ and denote the energy function as $U$ with or without the subscript $\theta$.}
    \begin{equation} \label{energy-discrepancy-loss}
        \ED_q (p_{\mathrm{data}}, U) := \mathbb{E}_{p_{\mathrm{data}}(\x)} [U(\x)] - \mathbb{E}_{p_{\mathrm{data}}(\x)}\mathbb{E}_{q(\y | \x)} [U_q (\y)]. 
    \end{equation}
\end{definition}

We will refer to $q$ as the \emph{perturbation}. The validity of this loss functional was proven in \citet{schroeder2023energy} in large generality: In particular, it is sufficient for $U^\ast = \argmin\ED_q (p_{\mathrm{data}}, U) \Leftrightarrow \exp(-U^\ast) \propto p_\data$ that any two points $\x, \y \in \mathcal X$ are $q$-equivalent, i.e. there exists a chain of states $(\z^i)_{i=1}^T \in \mathcal X$ with $\z^1 = \x, \z^T = \y$ such that $q(\z^{i+1} \vert \z^i) >0$ for all $i = 1, \dots, T-1$.

Energy discrepancy can also be understood from seeing it as a type of Kullback-Leibler divergence. Specifically, the loss function defined in \eqref{energy-discrepancy-loss} is equivalent to the expected Kullback-Leibler divergence
\begin{align}
    \argmin_U \ED_q (p_\data, U) \Leftrightarrow \argmin_U \mathbb E_{p_\data(\x)}\mathbb E_{q(\y\vert\x)}\left[\KL (p_\data(\cdot \vert\y), p_\ebm(\cdot\vert\y))\right]
\end{align}
where $p_\bullet(\x \vert \y)\propto p_\bullet(\cdot)q(\y\vert \cdot)$. This relates energy discrepancy to diffusion recovery likelihood objectives \citep{gao2020learning} and Kullback-Leibler contractions \citep{Lyu2011KLContractions}.
Energy discrepancy has demonstrated notable effectiveness in training EBMs in continuous spaces \citep{schroeder2023energy}. In the next section, we show how the loss can be defined in discrete spaces.

\section{Energy Discrepancies for Discrete Data}\label{sec:DiscreteED}
For this work we will first consider a state space for the data distribution that can be written as the product of $d$ discrete variables with $S_k$ classes each, i.e. $\mathcal X = \bigotimes_{k = 1}^{d} \{1, \dots, S_k\}$. Examples for spaces of this type are the categorical entries of a data table for which $d$ denotes the number of features, or binary image data sets for which we typically write $\mathcal X = \{0, 1\}^d$. To define energy discrepancy on such spaces we need to specify a perturbation process under the following considerations: 1) The negative samples obtained through $q$ are informative for training the EBM when only finite amounts of data are available. 2) The contrastive potential $U_q(\y)$ has a numerically tractable approximation.

Let us consider one component $\mathcal X = \{1, \dots, S\}$, only. Inspired from previous works on diffusion modelling for discrete data \citep{campbell2022continuous, sun2022score, lou2023discrete, campbell2024generative} we model the perturbation as a continuous time Markov chain (CTMC) with transition probability
\begin{equation*}
    q_t(y = b\vert x = a) = \exp\left(tR\right)_{ba}\,, \quad a, b \in \{1, 2, \dots, S\}
\end{equation*}
where $R\in \mathbb R^{S\times S}$ is the so-called rate matrix which satisfies $R_{bb} = -\sum_{a\neq b}^S R_{ba}$ and $\exp(tR)$ is the matrix exponential. 
For a given rate matrix $R$, this approach then leaves us with a single tunable hyperparameter $t$ characterising the magnitude of perturbation applied. We first analyse how the choice of rate matrix and time parameter affect the statistical properties of the energy discrepancy loss. In fact, under weak conditions, the energy discrepancy loss converges to maximum likelihood estimation for $t\to \infty$, thus achieving the same loss function implemented by contrastive divergence:
\begin{restatable}[]{theorem}{restatheoremone}
\label{TheoremMLElimit}
Let $q_t(\cdot\vert x)$ be a Markov transition density defined by the rate matrix $R$ with eigenvalues $0 =  \lambda_1(R) \geq \lambda_2(R) \geq \dots \geq\lambda_S(R)$ and uniform stationary distribution. Then, there exists a constant $z_t$ \emph{independent} of $\theta$ such that energy-discrepancy converges to the maximum-likelihood loss
\begin{equation*}
    \left\lvert \ED_{q_t}(p_\data, U_\theta) -\mathcal L_{\mathrm{MLE}}(\theta) - z_t\right\rvert\leq \sqrt{S}\exp(-\vert\lambda_2(R)\vert t) \infdiv{p_\data}{p_\theta}\,.
\end{equation*}
with the loss of maximum-likelihood estimation $\mathcal L_{\mathrm{MLE}}(\theta) := -\mathbb E_{p_\data(\x)}\big[\log p_\theta(\x)\big]$.
\end{restatable}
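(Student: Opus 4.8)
The plan is to reduce the statement to a Kullback--Leibler contraction estimate along the diffusion semigroup and then control that contraction spectrally. Writing $p_\theta(\x) = \exp(-U_\theta(\x))/Z_\theta$, the contrastive potential collapses to a simple form: since $\sum_{\x'} q_t(\y\mid\x')\exp(-U_\theta(\x')) = Z_\theta \sum_{\x'} q_t(\y\mid\x') p_\theta(\x')$, we have $U_{q_t}(\y) = -\log Z_\theta - \log r_t(\y)$, where $r_t(\y) := \sum_{\x'} q_t(\y\mid\x')p_\theta(\x')$ is the model distribution pushed forward through the perturbation. Introducing likewise the forward-diffused data distribution $s_t(\y) := \sum_{\x} q_t(\y\mid\x)p_\data(\x)$, a short computation using $\E_{p_\data(\x)}\E_{q_t(\y\mid\x)}[f(\y)] = \sum_\y s_t(\y) f(\y)$ together with $\mathcal L_{\mathrm{MLE}}(\theta) = \E_{p_\data}[U_\theta] + \log Z_\theta$ gives the identity $\ED_{q_t}(p_\data, U_\theta) - \mathcal L_{\mathrm{MLE}}(\theta) = \sum_\y s_t(\y)\log r_t(\y)$. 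Splitting $\log r_t = \log s_t + \log(r_t/s_t)$ and choosing the $\theta$-independent constant $z_t := \sum_\y s_t(\y)\log s_t(\y)$ (note that $s_t$ does not depend on $\theta$), I obtain the clean identity $\ED_{q_t}(p_\data,U_\theta) - \mathcal L_{\mathrm{MLE}}(\theta) - z_t = -\infdiv{s_t}{r_t}$. Hence the theorem is equivalent to the contraction bound $\infdiv{s_t}{r_t} \le \sqrt S\exp(-|\lambda_2(R)|t)\,\infdiv{p_\data}{p_\theta}$.

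Next I would set up the spectral picture. A uniform stationary distribution together with the reversibility implicit in the real spectrum $0 = \lambda_1 \ge \lambda_2 \ge \cdots$ forces $R$ to be symmetric (detailed balance with respect to the uniform law), so it admits an orthonormal eigenbasis $v_1,\dots,v_S$ with $v_1 \propto \mathbf 1$. Viewing $s_t$ and $r_t$ as vectors, $s_t = \exp(tR)p_\data$ and $r_t = \exp(tR)p_\theta$, whence $s_t - r_t = \exp(tR)(p_\data - p_\theta)$. Because $p_\data - p_\theta$ sums to zero it is orthogonal to $v_1$; expanding in the remaining eigenvectors and using $\lambda_i \le \lambda_2 < 0$ for $i \ge 2$ yields the $L^2$ contraction $\|s_t - r_t\|_2 \le \exp(-|\lambda_2(R)|t)\,\|p_\data - p_\theta\|_2$, which supplies the single exponential factor in the target bound.

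It then remains to pass from this $L^2$ statement to the divergence bound. The route is $\infdiv{s_t}{r_t} \le \chi^2(s_t\,\|\,r_t) = \sum_\y (s_t(\y)-r_t(\y))^2/r_t(\y)$, followed by a Cauchy--Schwarz / counting-measure step that trades the weighted $\chi^2$ for the unweighted $\|s_t - r_t\|_2$ at the cost of a factor $\sqrt S$, then the spectral contraction above, and finally a comparison inequality on the data side (Pinsker, in the form $\|p_\data - p_\theta\|_2^2 \le 2\,\infdiv{p_\data}{p_\theta}$) to recover $\infdiv{p_\data}{p_\theta}$. I expect the main obstacle to lie exactly here: the naive $\chi^2$ estimate produces the squared rate $\exp(-2|\lambda_2(R)|t)$ and a dimensional factor $S$, so landing the asserted clean constant $\sqrt S\exp(-|\lambda_2(R)|t)$ uniformly in $t \ge 0$ requires carefully controlling the $\chi^2$-denominators $r_t(\y)$ (via the proximity of $r_t$ to uniform) --- equivalently, establishing a strong data-processing (contraction-coefficient) estimate for KL along the heat semigroup rather than relying on the crude chi-squared surrogate. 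Checking the endpoints ($t=0$, where $\sqrt S \ge 1$ absorbs the slack, and $t\to\infty$, where both sides vanish) is routine and provides useful sanity checks on the constant.
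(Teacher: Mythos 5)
Your first step is correct and coincides exactly with the paper's: writing $Q_t p(\y) := \sum_{\x} q_t(\y\vert\x)\,p(\x)$, both you and the paper establish the identity
\begin{equation*}
 \ED_{q_t}(p_\data, U_\theta) - \mathcal L_{\mathrm{MLE}}(\theta) - z_t \;=\; -\infdiv{Q_t p_\data}{Q_t p_\theta}\,, \qquad z_t = \sum_{\y} Q_t p_\data(\y)\log Q_t p_\data(\y)\,,
\end{equation*}
which reduces the theorem to the contraction bound $\infdiv{Q_tp_\data}{Q_tp_\theta}\le \sqrt{S}\,e^{-\vert\lambda_2(R)\vert t}\,\infdiv{p_\data}{p_\theta}$. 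The genuine gap is that you never prove this bound, and the route you sketch cannot produce it. The chain $\KL\le\chi^2$, Cauchy--Schwarz, $L^2$-contraction, Pinsker needs a lower bound on the denominators $r_t(\y)=Q_tp_\theta(\y)$ that is uniform in $\theta$ and $t$, and none exists: for small and moderate $t$, $\min_\y r_t(\y)$ is only of the order of $\min_\x p_\theta(\x)$, which is unconstrained. Even granting the ideal control $r_t(\y)\gtrsim 1/S$ (itself only valid once the chain has nearly mixed), the route yields roughly $S\,e^{-2\vert\lambda_2(R)\vert t}\,\infdiv{p_\data}{p_\theta}$, which improves on the target only when $\sqrt{S}e^{-\vert\lambda_2(R)\vert t}\le 1$, while the trivial data-processing bound covers only the regime $\sqrt{S}e^{-\vert\lambda_2(R)\vert t}\ge 1$; near the crossover $t\approx \log S/(2\vert\lambda_2(R)\vert)$ the spectral mixing estimate only gives total-variation distance $1/2$ from uniform, which is vacuous as a pointwise lower bound on $r_t$. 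So an intermediate window of $t$ is left genuinely uncovered. You flag this obstacle yourself, which is the right instinct, but flagging it is not closing it.

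The missing idea --- and the one the paper uses --- is to decouple the spectral estimate from $\theta$ through a strong data-processing inequality. By \citet{raginsky2016strong}, the KL contraction coefficient of the channel $Q_t$ is bounded by the Dobrushin coefficient $\theta(Q_t)=\sup_{x,x'}\lVert Q_t\delta_x - Q_t\delta_{x'}\rVert_{\mathrm{TV}}\le 2\sup_x\lVert Q_t\delta_x - \pi\rVert_{\mathrm{TV}}$, which involves only point masses and the stationary distribution, not $p_\data$ or $p_\theta$. The chi-squared/spectral argument is then run against the uniform reference $\pi$, whose ``denominators'' are exactly $1/S$: by \citet[Proposition 3]{diaconis1991geometric}, $\lVert Q_t\delta_x - \pi\rVert_{\mathrm{TV}}^2\le\frac14\frac{1-\pi(x)}{\pi(x)}e^{-2\vert\lambda_2(R)\vert t}\le\frac{S}{4}e^{-2\vert\lambda_2(R)\vert t}$, and taking square roots produces the single exponential rate and the factor $\sqrt S$ simultaneously, uniformly in $t$ and $\theta$. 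Your $L^2$ intuition is thus the right ingredient, but it must be applied to $Q_t\delta_x$ versus $\pi$ inside the SDPI reduction rather than directly to $\infdiv{Q_tp_\data}{Q_tp_\theta}$. (A minor further point: a real spectrum does not by itself force reversibility as you claim; symmetry of $R$ is an implicit assumption of the theorem, needed both for your $L^2$ step and for the Diaconis--Stroock bound.)
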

Here, $z_t$ is a constant independent of $\theta$, so the optimisation landscapes of energy discrepancy estimation and maximum likelihood estimation in $\theta$ align at an exponential rate, except for a shift by $z_t$ which does not affect the optimisation. This result improves the linear convergence rate in \citet{schroeder2023energy} and relates it to the \emph{spectral gap} $\vert \lambda_2(R)\vert$ of the rate matrix. Such a result is meaningful as the maximum-likelihood estimator is generally statistically preferable with better sample efficiency, and \cref{TheoremMLElimit} suggests that energy discrepancy estimation can approximate maximum likelihood estimation without resorting to MCMC like in classical EBM training methods. The proof is given in \cref{pf:TheoremMLELimit}.
\subsection{Heat Equation in Structured Discrete Spaces}
\begin{figure}
    \centering
    \includegraphics[width = \textwidth]{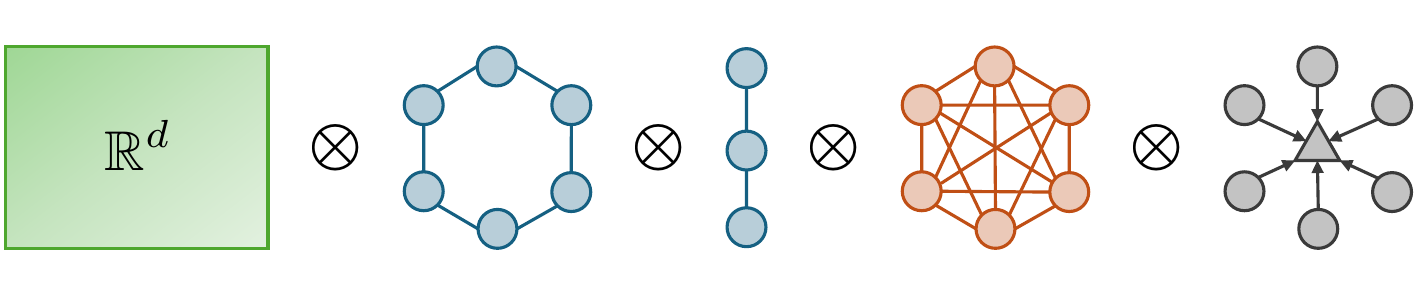}\label{figure_state_space}
    \caption{Visualisation of a typical state space of a tabular dataset: Numerical entries taking values in $\mathbb R^d$, cyclical categorical entries (e.g. season), ordinal categorical entries (e.g. age), unstructured categorical entries, and variables with an absorbing state associated with masking the entry.}
    \label{fig:enter-label}
\end{figure}

In principle, \cref{TheoremMLElimit} establishes that energy discrepancy converges to the loss of maximum likelihood estimation in the limit $t\to\infty$ for any choice of rate matrix with spectral gap. 
In practice, however, large perturbations of data can produce high-variance parameter gradients and provide little training signal. Instead, it is desirable to construct perturbations that allow a fine-grained trade-off between the statistical properties of the loss function and the variance of the gradients. For this reason, we investigate the perturbation for small $t$ which, as we will see, can be informed by the assumed \emph{graph structure} of the underlying discrete space.

The infinitesimal perturbations of the CTMC are characterised by the rate matrix. Notice that for small $t$, the Euler discretisation of the heat equation yields
\begin{equation*}
    q_{t}(y = b\vert x= a) \approx \delta(b, a) + t R_{ba}\,.
\end{equation*}
with $\delta(b, a) = 1 \Leftrightarrow a = b$ and zero otherwise. To model the relationship between values $a, b \in \{1, \dots, S\}$ we endow the space with a graph structure with adjacency matrix $A$ and out degree matrix $D_{\mathrm{out}}$ and model the rate matrix as the graph Laplacian $R := (A - D_{\mathrm{out}})$. By definition, the rows of the graph Laplacian matrix sum to zero $\sum_{a = 1}^S R_{ba} = 0$. The smallest possible perturbation is then characterised as the transition to an adjacent neighbour. The characterisation of the CTMC in terms of the graph Laplacian is implicitly assumed in previous work. \citet{campbell2022continuous} describe a diffusion via a uniform perturbation which corresponds to a fully connected graph and \citet{lou2023discrete} describe the rate matrix associated to a star graph with absorbing (masking) state:
\begin{align*}
    R^{\mathrm{unif}} = \boldsymbol{1} \boldsymbol{1}^T/S - \mathrm{id} \,,\quad\quad R^{\mathrm{mask}}_{ba} = \delta(M, a) - \mathrm{id}_{ba}\,.
\end{align*}
For a visualisation, see \cref{figure_state_space}. In addition to these fairly unstructured rate matrices we model state spaces with a cyclical or ordinal structure:
\begin{equation*}
    R^{\mathrm{cyc}} = \scalebox{0.8}{$\begin{pmatrix}
        -2 & 1 & 0 & \cdots & 0 & 1\\
        1 & -2 & 1 &0&\cdots& 0\\
        0 &-1& -2 & 1&0&\vdots\\
        \vdots & \ddots &\ddots&\ddots& \ddots& \ddots \\
        0 & &0&1& -2 & 1\\
        1 & 0 &\cdots&0&1&-2
    \end{pmatrix}
    $}
    \quad\quad
    R^{\mathrm{ord}} = \scalebox{0.8}{$\begin{pmatrix}
        -1 & 1 & 0 & \cdots & 0 & 0\\
        1 &-2 & 1 &0&\cdots& 0\\
        0 & 1&-2& 1&0&\vdots\\
        \vdots & \ddots &\ddots&\ddots& \ddots& \ddots \\
        0 &  &0&1&-2&1\\
        0 & 0 &\cdots&0& 1&- 1
    \end{pmatrix} $}\,.
\end{equation*}

We restrict ourselves to uniform, cyclical, and ordinal structures as these structures can typically be trivially inferred from the type of data modelled. For example, periodically changing quantities (e.g. seasons) would display a cyclical structure and ordered information like age an ordinal structure. It is possible, however, to extend our framework to arbitrary graphical structures of the state space as long as eigenvalue decompositions of the graph Laplacian are feasible.

Since the Gaussian perturbation on Euclidean space used in \citet{schroeder2023energy} is also the solution of a heat equation, these choices allow us to model the perturbation on a vector of mixed entries including numerical values, unstructured categorical values, and structured categorical values with a single differential equation
\begin{equation} \label{eq:pde-heat-diffuion}
    \partial_t q_t(\cdot \vert \x) = \left(\Delta^{\mathrm{num}} \otimes R^{\mathrm{cyc}} \otimes R^{\mathrm{ord}}\otimes R^{\mathrm{unif}} \otimes R^{\mathrm{abs}}\right) q_t(\cdot \vert \x)\,, \quad q_0(\cdot \vert \x) = \delta(\cdot, \mathbf x)
\end{equation}
where $\Delta^{\mathrm{num}}$ denotes the standard Laplace operator $\sum_{k=1}^d \partial^2/\partial_{\mathbf x^\mathrm{num}}^2$ and the product $\otimes$ denotes that each operator acts on the corresponding component of the state space. 

\section{Estimating the Energy Discrepancy Loss}\label{sec:EstimatingED}
We now discuss how discrete energy discrepancy can be estimated. We will typically assume that each dimension of the data point is perturbed independently, i.e. the perturbation $q(\mathbf y \vert \mathbf x)$ is modelled as the product of component-wise perturbations. On Euclidean data, we resort to the implementation in \citet{schroeder2023energy} and obtain perturbed samples by adding isotropic Gaussian noise to the samples. We are now left with the heat equation on discrete space.
\subsection{Solving the Heat Equation}
In the case of the uniform Laplacian $R_{\mathrm{unif}} = \boldsymbol{1} \boldsymbol{1}^T/S - \mathrm{id}$, the heat equation has the closed form solution
\begin{equation}\label{equ:uniform_perturbation_estimation}
    q_t(y\vert x=a) = e^{-t}\delta(y, a) + \frac{1-e^{-t}}{S}\sum_{k=1}^S \delta(y, k)\,.
\end{equation}
Practically speaking, this perturbation remains in its state with probability $e^{-t}$ and samples uniformly from the state space otherwise. The case of the cyclical and ordinal structure is more delicate. We first note that the heat equation can be solved in terms of its eigenvalue expansion $\exp(Rt) = \mathbf{V} \exp(\Lambda t) \mathbf{V}^\ast$, where $\Lambda$ is the matrix with the eigenvalues $\lambda_p$ along its diagonal and $\mathbf{V}$ is a matrix of orthogonal eigenvectors with each column containing the corresponding eigenvector $\mathbf v_p$. The perturbation for $R^{\mathrm{cyc}}$ and $R^{\mathrm{ord}}$ can then be computed by means of a discrete Fourier transform:
\begin{restatable}[]{proposition}{restateproposition}\label{prop:SolutionHeatEquation}
    Assume the density $q_t(b \vert a) := q_t(y= b\vert x= a)$ is defined by the rate matrices $R^{\mathrm{cyc}}$ or $R^\mathrm{ord}$. The transition density for all $a, b \in \{1, 2, \dots, S\}$ is given by
\begin{align}
    q^\mathrm{cyc}_t(b\vert a) & = \frac{1}{S}\sum_{p=1}^S \exp(2\pi \mathrm i b\omega^\mathrm{cyc}_p)\, \exp\big((2\cos(2\pi \omega^\mathrm{cyc}_p) - 2) t\big)\,\exp(-2\pi \mathrm i a\omega^\mathrm{cyc}_p) \label{eq:cyclical_perturbation_estimation} \\
    q^\mathrm{ord}_t(b\vert a) & = \frac{2}{S} \sum_{p=1}^S \frac{1}{z_p}\, \cos((2b-1)\pi \omega^\mathrm{ord}_p) \,\exp\big((2\cos(2\pi \omega^\mathrm{ord}_p) - 2) t\big)\, \cos((2a-1)\pi\omega^\mathrm{ord}_p)\, \label{eq:ordinal_perturbation_estimation}
\end{align}
where $\omega_p^{\mathrm{cyc}} = (p-1)/S$ and $\omega_p^{\mathrm{ord}} = (p-1)/2S$, respectively, and $z_p = (2, 1, \dots, 1)$.
\end{restatable}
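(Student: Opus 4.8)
The plan is to diagonalise each rate matrix explicitly and then read off the matrix exponential through the spectral expansion $\exp(tR) = \mathbf V \exp(t\Lambda)\mathbf V^\ast$ already quoted in the text, so that its $(b,a)$ entry becomes $\sum_{p} v_p(b)\exp(\lambda_p t)\overline{v_p(a)}$, where $v_p$ is the $p$-th orthonormal eigenvector and $\lambda_p$ its eigenvalue. Both $R^{\mathrm{cyc}}$ and $R^{\mathrm{ord}}$ are, up to their boundary rows, second-difference operators, so the natural eigenbases are the discrete Fourier and discrete cosine bases respectively. The substance of the proof is therefore to identify the eigenvalues, confirm the proposed eigenvectors, and track the normalisation constants.

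For the cyclical case I would exploit that $R^{\mathrm{cyc}}$ is a circulant matrix: each row is a cyclic shift of the stencil $(-2,1,0,\dots,0,1)$. Circulant matrices are simultaneously diagonalised by the Fourier modes $v_p(a) = S^{-1/2}\exp(2\pi\mathrm i\, a\,\omega^{\mathrm{cyc}}_p)$ with $\omega^{\mathrm{cyc}}_p = (p-1)/S$, and the corresponding eigenvalue is obtained by evaluating the symbol of the stencil, giving $\lambda_p = -2 + \exp(2\pi\mathrm i\,\omega^{\mathrm{cyc}}_p) + \exp(-2\pi\mathrm i\,\omega^{\mathrm{cyc}}_p) = 2\cos(2\pi\omega^{\mathrm{cyc}}_p) - 2$. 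Substituting these $v_p$ and $\lambda_p$ into the spectral expansion and folding both normalisation factors $S^{-1/2}$ into the prefactor $1/S$ reproduces \eqref{eq:cyclical_perturbation_estimation} immediately.

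For the ordinal case the structure is that of a one-dimensional discrete Laplacian with Neumann (reflecting) boundary conditions, encoded by the modified $-1$ entries in the first and last diagonal positions. I would propose the discrete-cosine eigenvectors $v_p(a) \propto \cos((2a-1)\pi\omega^{\mathrm{ord}}_p)$ with $\omega^{\mathrm{ord}}_p = (p-1)/(2S)$ and verify the eigenrelation $R^{\mathrm{ord}} v_p = \lambda_p v_p$ with $\lambda_p = 2\cos(2\pi\omega^{\mathrm{ord}}_p) - 2$. In the interior rows this follows from the cosine addition identity; at the two boundary rows the modified diagonal entry $-1$ is precisely what makes the reflected argument $(2a-1)$ satisfy the same recursion, so the boundary check is where the specific form of $R^{\mathrm{ord}}$ is genuinely used. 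Finally I would compute the squared norms of the $v_p$: the constant mode $p=1$ has squared norm $S$, whereas every other cosine mode has squared norm $S/2$, which is exactly the bookkeeping captured by $z_p = (2,1,\dots,1)$; dividing by these norms yields the orthonormal eigenvectors and hence the overall factor $2/(S z_p)$ in \eqref{eq:ordinal_perturbation_estimation}.

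The main obstacle I anticipate is the ordinal boundary verification together with its normalisation: unlike the clean circulant argument, one must confirm that the cosine ansatz is an exact eigenvector at the first and last rows and not merely in the interior, and that the constant mode's anomalous norm is correctly absorbed into $z_p$. Once the eigenpairs are confirmed and normalised, assembling $\sum_p v_p(b)\exp(\lambda_p t)\overline{v_p(a)}$ is routine and reproduces both stated formulas.
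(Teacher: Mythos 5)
Your proposal is correct and follows essentially the same route as the paper's proof: verify the Fourier ansatz $v_p(a)\propto\exp(-2\pi\mathrm{i}(p-1)a/S)$ row-by-row for the circulant matrix $R^{\mathrm{cyc}}$, and the cosine ansatz $v_p(a)\propto\cos((2a-1)\pi\omega_p^{\mathrm{ord}})$ for the tridiagonal $R^{\mathrm{ord}}$, with the cosine addition identity handling the interior rows and the boundary rows where the modified diagonal entries $-1$ matter, then assemble $\exp(tR)=\mathbf V\exp(\Lambda t)\mathbf V^\ast$. If anything, your plan is slightly more complete than the paper's write-up, since you explicitly compute the squared norms ($S$ for the constant mode, $S/2$ otherwise) that produce the factor $2/(Sz_p)$, whereas the paper only records unitarity for the Fourier basis and leaves the ordinal normalisation implicit.
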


For the derivation, see \cref{der:eigenvalues}. Due to this result, the heat equation can be efficiently solved in parallel without requiring any sequential operations like multiple Euler steps. In addition, the transition matrices can be computed and saved in advance, thus reducing the computational complexity to the matrix multiplication with a batch of one-hot encoded data points.

\paragraph{Gaussian limit and choice of time parameter.} For tabular data sets the cardinality $S$ changes between different dimensions which raises the question how $t$ should be scaled with $S$. To answer this question we observe the following scaling limit of the perturbation:

\begin{restatable}[Scaling limit]{theorem}{restatetheoremtwo}\label{thm:ScalingLimit}
    Let $y_t \sim q_t(\cdot \vert x = \mu S)$ with $\mu \in \{1/S, 2/S, \dots, 1\}$ where $q_t$ is either the transition density of the cyclical or ordinal perturbation. Let $\varphi: \mathbb R\to (0, 1]$, where for all $n\in \mathbb Z$ and $x\in (0, 1]$ $\varphi^{\mathrm{cyc}}(n+x) = x$ and $\varphi^{\mathrm{ord}}(2n+x) = x$, $\varphi^{\mathrm{ord}}(2n+1+x) = -x$.  Then,
\begin{equation*}
    y_{S^2t}/S\xrightarrow{S\to\infty} \varphi(\xi_t) \quad \text{with}\quad \xi_t \sim \mathcal N(\mu, 2t)\,.
\end{equation*} 
\end{restatable}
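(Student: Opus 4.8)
The plan is to pass to the spectral representations of \cref{prop:SolutionHeatEquation} and to establish convergence in distribution by checking it against a complete family of bounded test functions: the Fourier characters $b\mapsto\exp(2\pi\mathrm{i}kb/S)$ in the cyclic case and the Neumann cosines $b\mapsto\cos(\pi k\,b/S)$ in the ordinal case. Since both $y_{S^2t}/S$ and the candidate limit $\varphi(\xi_t)$ live on the compact interval $(0,1]$, the laws are automatically tight, so convergence of these transforms suffices for weak convergence. The whole argument then collapses onto a single asymptotic fact about the spectrum: for a fixed mode index, the Taylor expansion $2\cos\theta-2=-\theta^2+O(\theta^4)$ turns the discrete eigenvalues into those of the continuous Laplacian once time is accelerated by the factor $S^2$, and it is exactly this that produces the Gaussian variance $2t$.

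For the cyclic perturbation I would compute the circular characteristic function $\chi_k(S):=\mathbb{E}\big[\exp(2\pi\mathrm{i}k\,y_{S^2t}/S)\big]$ for $k\in\mathbb Z$ by inserting \eqref{eq:cyclical_perturbation_estimation} at time $S^2t$ and starting point $a=\mu S$ (an integer, since $\mu\in\{1/S,\dots,1\}$). Summing over the terminal state and using that $\sum_{b=1}^S\exp\!\big(2\pi\mathrm{i}b(k+p-1)/S\big)$ equals $S$ when $k+p-1$ is a multiple of $S$ and vanishes otherwise collapses the double sum onto the unique frequency with $p-1\equiv-k$ modulo $S$, giving $\chi_k(S)=\exp(2\pi\mathrm{i}\mu k)\,\exp\!\big((2\cos(2\pi k/S)-2)\,S^2t\big)$. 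Expanding the eigenvalue yields $(2\cos(2\pi k/S)-2)S^2t\to-4\pi^2k^2t$, so $\chi_k(S)\to\exp(2\pi\mathrm{i}\mu k)\exp(-4\pi^2k^2t)$. These are precisely the Fourier coefficients of the wrapped Gaussian, i.e.\ of $\varphi^{\mathrm{cyc}}(\xi_t)$ with $\xi_t\sim\mathcal N(\mu,2t)$, because $b\mapsto\exp(2\pi\mathrm{i}kb)$ is $1$-periodic and hence invariant under the wrapping map $\varphi^{\mathrm{cyc}}$. Convergence of every Fourier coefficient then gives the claim.

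For the ordinal perturbation I would exploit the orthonormality (in counting measure) of the cosine eigenfunctions $\psi_p(b)=\sqrt{2/(Sz_p)}\,\cos\!\big((2b-1)\pi\omega_p^{\mathrm{ord}}\big)$ underlying \eqref{eq:ordinal_perturbation_estimation}. The spectral decomposition gives $\mathbb{E}[\psi_p(y_{S^2t})]=\psi_p(\mu S)\exp(\lambda_pS^2t)$ with $\lambda_p=2\cos(2\pi\omega_p^{\mathrm{ord}})-2$; cancelling the normalisation leaves $\mathbb{E}\big[\cos((2y_{S^2t}-1)\pi\omega_p^{\mathrm{ord}})\big]=\cos((2\mu S-1)\pi\omega_p^{\mathrm{ord}})\exp(\lambda_pS^2t)$. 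Fixing the mode $p=k+1$ and letting $S\to\infty$, the arguments $(2b-1)\pi\omega_{k+1}^{\mathrm{ord}}=\pi k(b/S-1/(2S))$ and $(2\mu S-1)\pi\omega_{k+1}^{\mathrm{ord}}$ converge uniformly to $\pi k\,(b/S)$ and $\pi k\mu$, while $\lambda_{k+1}S^2t=(2\cos(\pi k/S)-2)S^2t\to-\pi^2k^2t$. Extracting a weak limit $Y$ by tightness, this yields $\mathbb{E}[\cos(\pi k\,Y)]=\cos(\pi k\mu)\exp(-\pi^2k^2t)$ for every $k\ge0$. These are exactly the cosine moments of the reflected (folded) Gaussian $\varphi^{\mathrm{ord}}(\xi_t)$, equivalently of reflecting Brownian motion on $(0,1]$: the character $\cos(\pi k\,\cdot)$ is even and $2$-periodic and hence invariant under the reflection map $\varphi^{\mathrm{ord}}$, so $\mathbb{E}[\cos(\pi k\varphi^{\mathrm{ord}}(\xi_t))]=\mathbb{E}[\cos(\pi k\xi_t)]=\cos(\pi k\mu)\exp(-\pi^2k^2t)$. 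As the cosines form a complete system on $(0,1]$, the moments identify the limit uniquely and the full sequence converges.

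I expect the ordinal case to be the main obstacle rather than the cyclic one. Two points need care. First, making the reflection dictionary exact: one must verify that $\varphi^{\mathrm{ord}}$ is the fundamental-domain projection for the reflections generating the Neumann boundary conditions, so that the cosines are genuinely invariant and the method-of-images sum over the mirrored starting points reproduces the law of $\varphi^{\mathrm{ord}}(\xi_t)$ (the boundary points form a null set and may be ignored). Second, upgrading pointwise-in-$k$ convergence of the transforms to weak convergence; this is routine because compactness of $(0,1]$ forces tightness and the characters form a convergence-determining class, but it should be stated rather than assumed. By contrast, the eigenvalue Taylor estimate is uniform for each fixed mode with an $O(S^{-2})$ remainder and poses no difficulty, so the only genuine analytic content is the reflected-boundary bookkeeping.
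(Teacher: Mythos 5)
Your proposal is correct, but it follows a genuinely different route from the paper's. The paper argues probabilistically: it introduces an \emph{unconstrained} continuous-time walk $z_t$ on $\mathbb{Z}$ with jump rate $2$, represents the cyclical/ordinal chains as foldings $\varphi_S(z_t)$ of this free walk, writes $z_t$ as a compound Poisson sum $\mu S + \sum_{j=1}^{N_t} J_j$ with $N_t \sim \mathrm{Poisson}(2t)$, computes the characteristic function of $z_{S^2 t}/S$ to obtain the Gaussian limit $\exp(\iu s \mu - t s^2)$, and concludes by the continuous mapping theorem applied to $\varphi$. You instead work directly with the \emph{constrained} chain through its spectral representation (\cref{prop:SolutionHeatEquation}): orthogonality collapses the Fourier and cosine transforms onto single modes, the accelerated eigenvalues $(2\cos(2\pi k/S)-2)S^2 t$ and $(2\cos(\pi k/S)-2)S^2t$ converge to $-4\pi^2k^2t$ and $-\pi^2k^2t$, and the limit is identified as the wrapped/folded Gaussian because $\exp(2\pi \iu k\,\cdot)$ and $\cos(\pi k\,\cdot)$ are exactly invariant under $\varphi^{\mathrm{cyc}}$ and $\varphi^{\mathrm{ord}}$; completeness of these character systems plus automatic tightness (on $[0,1]$, not $(0,1]$ as you write, though this is immaterial since the limit has no atoms at the endpoints) yields weak convergence. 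Both arguments ultimately rest on the same Taylor step $2\cos\theta - 2 = -\theta^2 + O(\theta^4)$, which enters the paper's proof through the compound-Poisson characteristic function and yours through the eigenvalues. What your route buys: the eigenrelation $\mathbb{E}[\psi_p(y_t)] = e^{\lambda_p t}\psi_p(a)$ holds exactly for the constrained chain (by symmetry of $R$), so you never need to verify the image-method claim that the ordinal chain literally \emph{is} a folded free walk — a point where the paper's argument is delicate, since the ordinal generator has boundary rates $-1$ rather than $-2$ and the folding convention must be chosen carefully for $y_t^{\mathrm{ord}} = \varphi_S(z_t)$ to be exact; in your proof the reflection only ever acts on the continuum Gaussian, where invariance of $\cos(\pi k\,\cdot)$ under the group generated by $x \mapsto -x$ and $x \mapsto x+2$ is a trigonometric identity. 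What the paper's route buys: a single uniform argument for both geometries, no appeal to convergence-determining classes or subsequence extraction, and the result is exhibited transparently as a discrete invariance principle.
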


Consequently, under the rescaling of time and space prescribed, the perturbation behaves independently of the state space size like a Gaussian with variance $2t$ that is reflected or periodically continued at the boundary states of $(0, 1]$. The phenomenon is visualised in \cref{fig:scaling_limit}. Based on this scaling limit we typically choose a quadratic rule $t = S^2t_{\mathrm{base}}$. Alternatively, we may choose a linear rule $t= S t_{\mathrm{base}}$ in which case the limit becomes a regular Gaussian on $\mathbb R_+$, thus recovering the Euclidean case from \citet{schroeder2023energy}. The theorem is proven in \cref{pf:UniversalScalingLimit}.

As a byproduct of this result we can also approximate the perturbation with discretised rescaled samples from a standard normal distribution and applying either periodic or reflecting mappings on perturbed states outside the domain. This may be computationally favourable for spaces of the form $\{1, \dots, S\}^d$ where the vocabulary size $S$ and dimension of the state space $d$ grow very large.

\paragraph{Localisation to random grid.}
For unstructured categorical variables the uniform perturbation may introduce too much noise to inform the EBM about the correlations in the data set. In these cases, it can be beneficial to sample a random dimension $k \in \{1, \dots, d\}$ and apply a larger perturbation in this dimension, only. This effectively means to replace the product of categorical distributions with a mixture perturbation
\begin{equation*}
    q_t(\y\vert \x) = \prod_{k = 1}^d  q_t(y_k \vert x_k) \rightarrow \frac{1}{d}\sum_{k = 1}^d q_t(y_k \vert x_k)
\end{equation*}
In our experiments we only consider the case of perturbing the randomly chosen dimension uniformly. We call this grid perturbation due to connections with concrete score matching \citep{Meng2022concreteSM}. The resulting loss can be understood as a variation of pseudo-likelihood estimation.
\paragraph{Special case of binary state space.}
In the special case of $\mathcal X = \{0, 1\}^d$, the structures of the cyclical, ordinal, and uniform graph coincide, and the perturbation $q_t(\mathbf y\vert\mathbf x)$ becomes the product of identical Bernoulli distributions with parameter $\varepsilon = 0.5(1 - \mathrm{e}^{-t})$. We also explore the grid perturbation which assumes that a dimension is selected at random and the entry is flipped deterministically from zero to one or one to zero. 
For details, see \cref{subsec:Binary}. 
\subsection{Estimation of the Contrastive Potential}
The final challenge in turning energy discrepancy into a practical loss function lies in the estimation of the contrastive potential $U_q$. We use the fact that for a symmetric rate matrix $R$, the induced perturbation is symmetric as well, i.e. $q_t(\mathbf y\vert \mathbf x) = q_t(\mathbf x \vert \mathbf y)$. Thus, we first write the contrastive potential as an expectation $U_q (\y) = -\log \sum_{\x\in\mathcal X}  \exp(-U(\x)) q(\y\vert\x) = -\log \mathbb{E}_{q(\mathbf x\vert \mathbf y)} \left[\exp(-U(\x))\right]$ and subsequently approximate the energy discrepancy loss as in \citet{schroeder2023energy} as
$\mathcal L_{q, M, w}(U) := \frac{1}{N} \sum_{i=1}^N \log\left(w+ \sum_{j= 1}^M \exp(U(\x^i) - U({\x}_-^{i,j})\right)$
with $\x^i \sim p_\data$, $\y^i \sim q_t(\cdot | \x^i)$, and ${\x}_-^{i,j} \sim q_t(\cdot\vert \mathbf \y^i)$.
The details of the training procedure are provided in \cref{sec:app-ebm-tabular}.

\section{Related Work}
\textbf{Contrastive loss functions.}
Our work is based on energy discrepancies first introduced in \citep{schroeder2023energy}. Energy discrepancies are equivalent to certain types of KL contraction divergences whose theory was studied in \citet{Lyu2011KLContractions}, however, without proposing a training algorithm for EBM's. On Euclidean data, ED is related to diffusion recovery-likelihood \citep{gao2020learning} which uses a CD-type training algorithm. For a masking perturbation, ED estimation can be understood as a Monte-Carlo approximation of pseudo-likelihood \citep{besagpseudolikelihood}. Furthermore, the structure of the stabilised energy discrepancy loss shares similarities with other contrastive losses such as \citet{ConditionalNoiseContrastiveEstimation,gutmann2010noise,Oord2018RepresentationLearning, foster2020unified} due to their close connection to the Kullback-Leibler divergence.

\textbf{Discrete diffusion models.}
We extend the continuous time Markov chain framework introduced and developed in \citet{campbell2022continuous, campbell2024generative, lou2023discrete} and provides a geometric interpretation thereof. Similar to us, \citet{kotelnikov2023tabddpm} defines a flow on mixed state spaces as the product of a Gaussian and a categorical flow, utilising multinomial flows \citep{hoogeboom2021argmax}. Our work has connections to concrete score matching \citep{Meng2022concreteSM} through the usage of neighbourhood structures to define a replacement of the continuous score function.

\textbf{Contrastive divergence and sampling.}
Contrastive divergence (CD) is commonly utilised for training energy-based models in continuous spaces with Langevin dynamics \citep{xie2016theory,xie2018cooperative,xie2022tale,du2020improved,xiao2020vaebm}. In discrete spaces, EBM training heavily relies on CD methods as well, which is a major driver for the development of discrete sampling strategies. The standard Gibbs method was improved by \citet{zanella2020informed} through locally informed proposals. This method was extended to include gradient information \citep{grathwohl2021oops} to drastically reduce the computational complexity of flipping bits in several places \citep{sun2022path, emami2023plug, sun2022optimalscaling}. Moreover, a discrete version of Langevin sampling was introduced based on this idea \citep{zhang2022langevin,rhodes2022enhanced,sun2023discrete}. Consequently, most current implementations of contrastive divergence use multiple steps of a gradient-based discrete sampler. Alternatively, EBMs can be trained using generative flow networks which learn a Markov chain that construct data optimising the energy as reward function \citep{zhang2022generative}.

\textbf{Other training methods and applications of EBMs for discrete and mixed data.}
A sampling-free approach for training binary discrete EBMs is ratio matching \citep{hyvarinen2007some,lyu2012interpretation, liu2023RMwGGIS}. \cite{dai2020learning} propose to apply variational approaches to train discrete EBMs instead of MCMC. \cite{eikemaapproximate} replace the widely-used Gibbs algorithms with quasi-rejection sampling to trade off the efficiency and accuracy of the sampling procedure. The perturb-and-map \citep{papandreou2011perturb} is also recently utilised to sample and learn in discrete EBMs \citep{lazaro2021perturb}. \citet{tran2011mixed} introduce mixed-variate restricted Boltzmann machines for energy-based modelling on mixed state spaces. Deep architectures, on the other hand, have been mostly limited to a single categorical target variable which is modelled via a classifier \citep{grathwohl2019your}.
Moreover, \cite{ou2022learning} apply discrete EBMs on set function learning, in which the discrete energy function is learned with approximate marginal inference \citep{domke2013learning}.
\section{Experiments}\label{sec:experiments}
To evaluate our proposed approach, we conduct experiments across diverse scenarios: i) estimating probability distributions on discrete data; ii) handling mixed-state features in tabular data; and iii) modelling binary images. We also explore Ising model training and graph generation in binary spaces, but leave the detailed evaluation of these in \cref{sec-appendix-experiments}.

\begin{figure}[!t]
    \centering
    \begin{minipage}[t]{0.05\linewidth}
        \centering
        \raisebox{-.5\height}{\includegraphics[width=.48in]{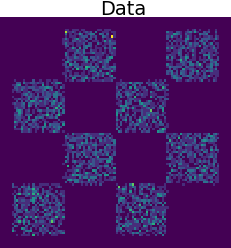}}
    \end{minipage}
    \begin{minipage}[t]{0.25\linewidth}
        \centering
        \includegraphics[width=.48in]{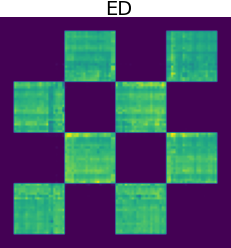}
        \includegraphics[width=.48in]{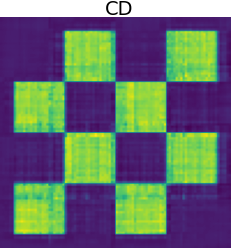}
        \includegraphics[width=.48in]{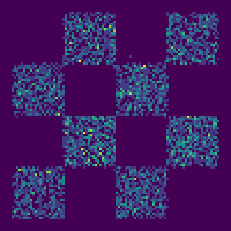}
        \includegraphics[width=.48in]{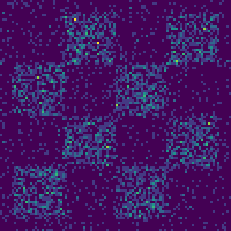}
    \end{minipage}
    \begin{minipage}[t]{0.05\linewidth}
        \centering
        \raisebox{-.5\height}{\includegraphics[width=.48in]{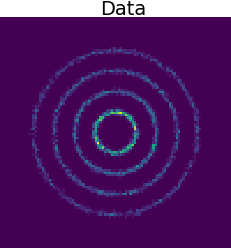}}
    \end{minipage}
    \begin{minipage}[t]{0.25\linewidth}
        \centering
        \includegraphics[width=.48in]{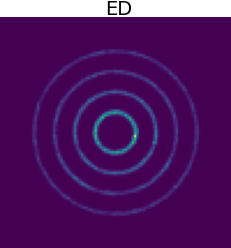}
        \includegraphics[width=.48in]{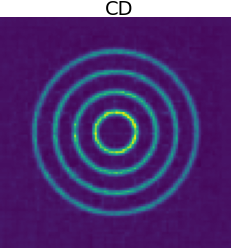}
        \includegraphics[width=.48in]{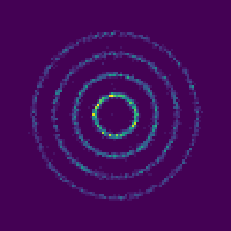}
        \includegraphics[width=.48in]{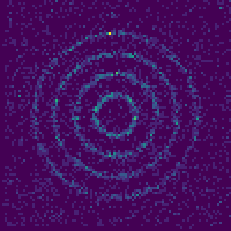}
    \end{minipage}
    \begin{minipage}[t]{0.05\linewidth}
        \centering
        \raisebox{-.5\height}{\includegraphics[width=.48in]{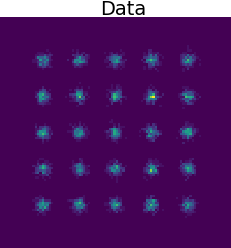}}
    \end{minipage}
    \begin{minipage}[t]{0.25\linewidth}
        \centering
        \includegraphics[width=.48in]{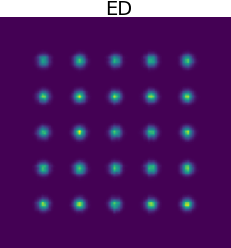}
        \includegraphics[width=.48in]{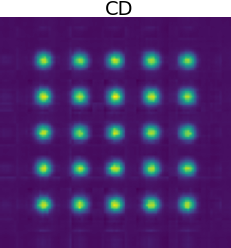}
        \includegraphics[width=.48in]{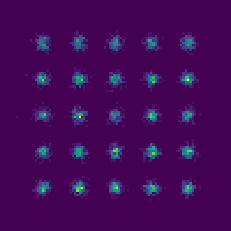}
        \includegraphics[width=.48in]{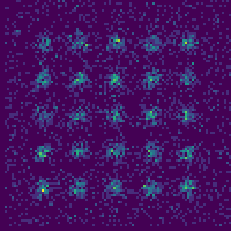}
    \end{minipage}
    \caption{Comparison of energy discrepancy and contrastive divergence on the dataset with $16$ dimensions and $5$ states. Rows $1$ and $2$ show the estimated density and synthesised samples, respectively.}
    \label{fig:exp-synthetic-5states}
\end{figure}

\subsection{Discrete Density Estimation}
We first demonstrate the effectiveness of energy discrepancy on density estimation using synthetic discrete data. Following \cite{dai2020learning}, we initially generate 2D floating-point data from several two-dimensional distributions. Each dimension of the data is then converted into a $16$-bit Gray code, resulting in a dataset with $32$ dimensions and $2$ states. To construct datasets beyond binary cases, we follow \cite{zhang2024formulating} and transform each dimension into $8$-bit $5$-base code and $6$-bit decimal code. This process creates two additional datasets: one with $16$ dimensions and $5$ states, and another with $12$ dimensions and $10$ states. The experimental details are given in \cref{appendix-sec-discrete-density-estimation}.

\cref{fig:exp-synthetic-5states} illustrates the estimated energies as well as samples that are synthesised with Gibbs sampling for energy discrepancy (ED) and contrastive divergence (CD) on the dataset with $16$ dimensions and $5$ states. It can be seen that ED excels at capturing the multi-modal nature of the distribution, consistently learning sharper energy landscape in the data support compared to CD. This coincides with the previous observations in continuous spaces \citep{schroeder2023energy}, suggesting ED's advantage in handling complex data structures.
For more results of additional datasets with $5$ and $10$ states, we deferred them to \cref{fig:appendix-exp-synthetic-5states,fig:appendix-exp-synthetic-10states}, respectively.

For binary cases with $2$ states, we compare our approaches to three baselines: PCD \citep{tieleman2008training}, ALOE+ \citep{dai2020learning}, and EB-GFN \citep{zhang2022generative}. 
In \cref{tab:synthetic_nll,tab:synthetic_mmd}, we quantitatively evaluate different methods by evaluating the negative log-likelihood (NLL) and the exponential Hamming MMD \citep{gretton2012kernel}, respectively. 
We observe that energy discrepancy outperforms the baseline methods in most settings, but without relying on MCMC simulations (as in PCD) or the training of additional variational networks (as in ALOE and EB-GFN). This performance gain is likely explained by the good theoretical guarantees of energy discrepancy for well-posed estimation tasks. In contrast, the baselines introduce biases due to their reliance on variational proposals and short-run MCMC sampling that may not have converged.

\subsection{Tabular Data Synthesising}
In this experiment, we assess our methods on synthesising tabular data, which presents a challenge due to its mix of numerical and categorical features, making it more difficult to model compared to conventional data formats. To demonstrate the efficacy of energy discrepancies, we first conduct experiments on synthetic examples before proceeding to real-world tabular data. 
Additional details regarding the experimental setup are deferred to \cref{appendix-sec-tabular}.

\begin{wrapfigure}{r}{0.4\linewidth}
\centering
\vspace{-6mm}
\begin{subfigure}{0.3\linewidth}
    \centering
    \includegraphics[width=1.\textwidth]{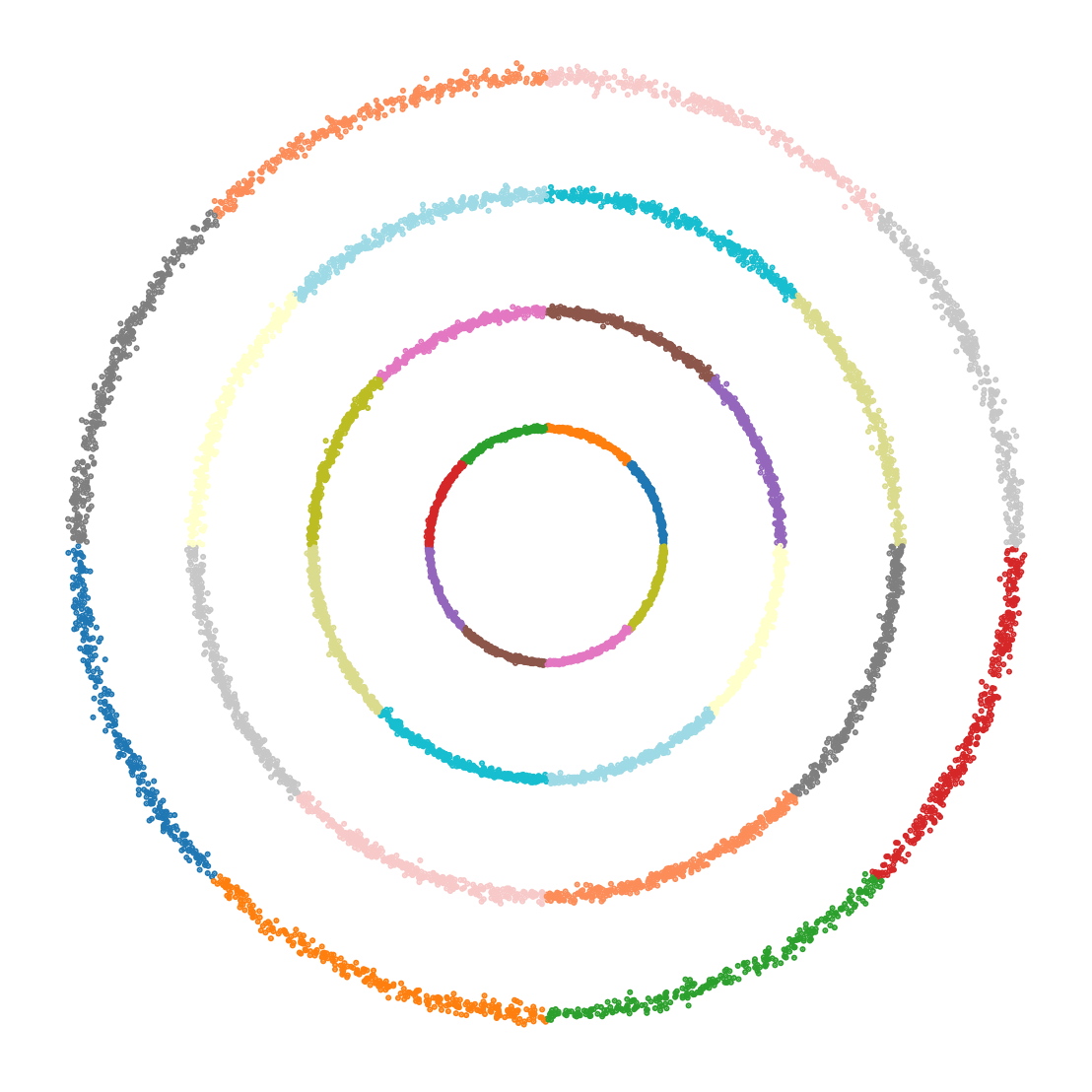}
    \vspace{-6mm}
    \caption*{{Data}}
\end{subfigure}
\begin{subfigure}{0.3\linewidth}
    \centering
    \includegraphics[width=1.\textwidth]{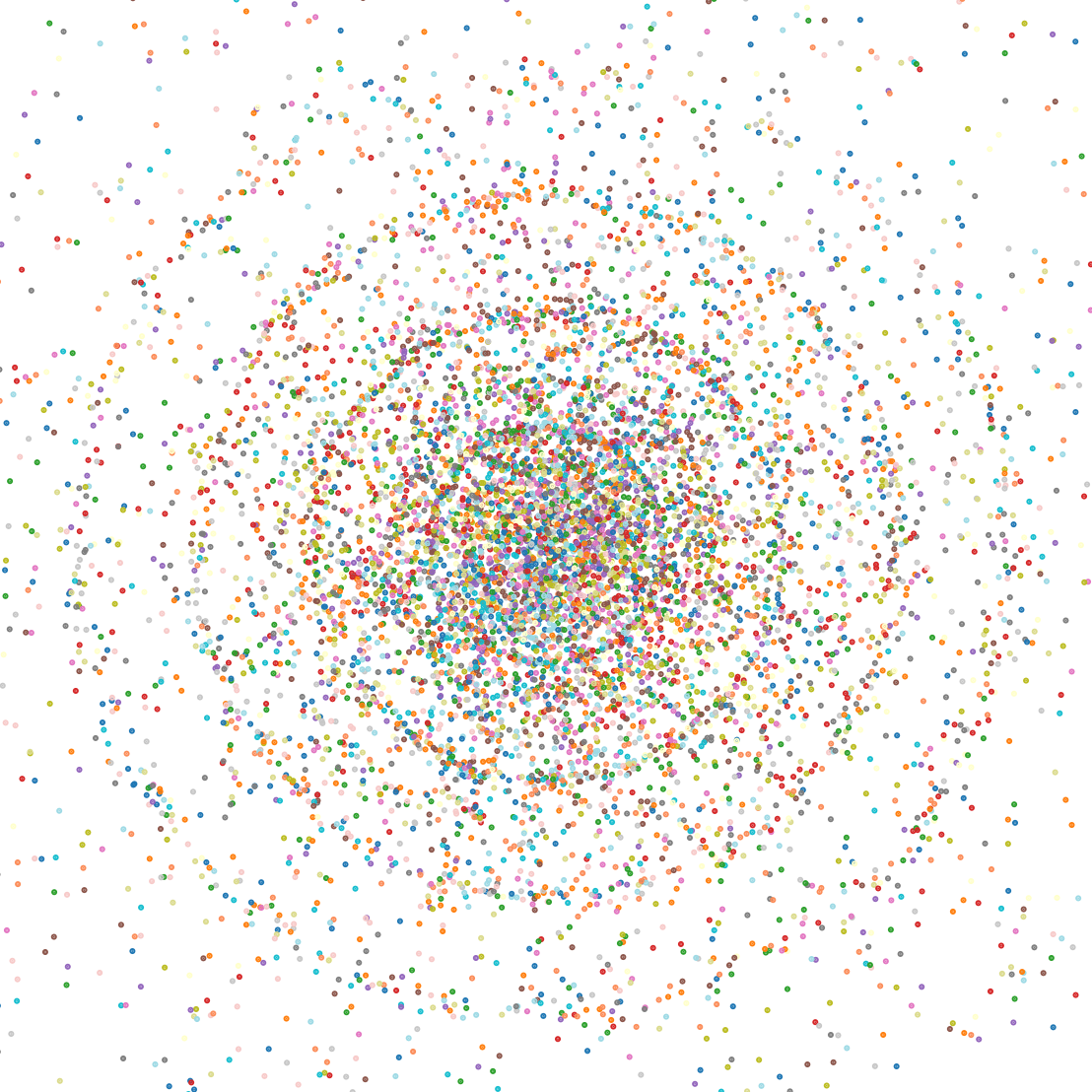}
    \vspace{-6mm}
    \caption*{{CD}}
\end{subfigure}
\begin{subfigure}{0.3\linewidth}
    \centering
    \includegraphics[width=1.\textwidth]{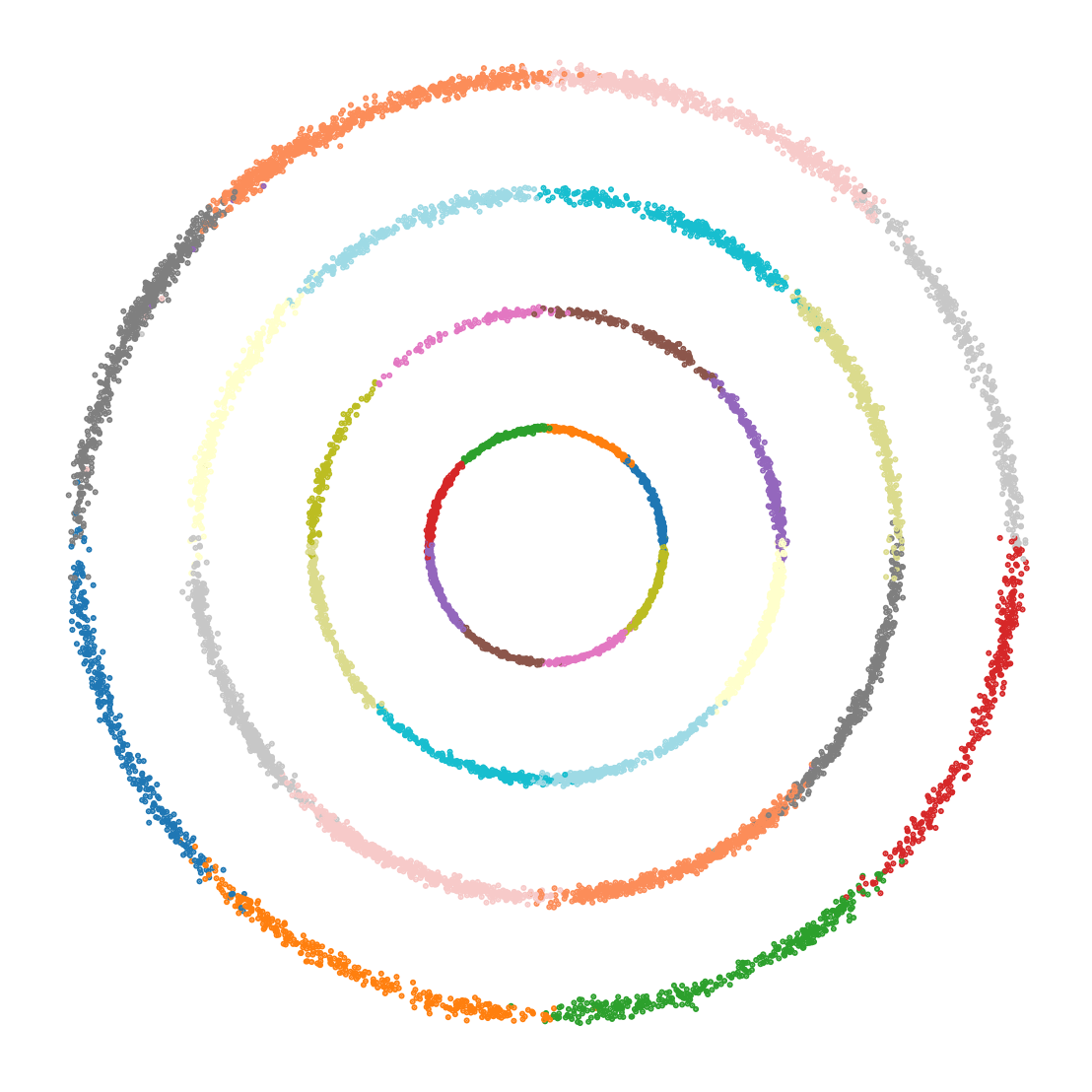}
    \vspace{-6mm}
    \caption*{{ED}}
\end{subfigure}
\caption{Comparison of the energy discrepancy and contrastive divergence on the synthetic tabular datasets.}
\label{fig:tabular_toy_compare}
\vspace{-4mm}
\end{wrapfigure}
\paragraph{Synthetic Dataset.} 
We first showcase the effectiveness of our methods on mixed data types by learning EBMs on a synthetic ring dataset. The dataset consists of four columns, with the first two columns indicating numerical coordinates of data points. The third column categorizes data points into four circles whereas the last column specifies the 16 colours each data point could be classified into. Therefore, each row in the tabular contains $2$ numerical features and $2$ categorical features.

To train an EBM on a dataset comprising mixed types of data, we employ either contrastive divergence or energy discrepancy. For CD, we adopt a strategy involving a replay buffer in conjunction with a short-run MCMC using $20$ steps. Specifically, we utilise Langevin dynamics and Gibbs sampling for numerical and categorical features, respectively. In the case of ED, we perturb the numerical features with a Gaussian perturbation and the categorical features with grid perturbation. 
\cref{fig:tabular_toy_compare} illustrates the results of synthesised samples generated from the learned energy using Gibbs sampling.
These findings align with those depicted in \cref{fig:exp-synthetic-5states}, where CD struggles to capture a faithful energy landscape, leading to synthesized samples potentially lying outside the data distribution support.
Instead, by leveraging a combination of perturbation techniques tailored to the data types present, ED offers a more robust and reliable framework for training EBMs in mixed state spaces.

\begin{table}[!t] 
    \centering
    \caption{Results on real-world datasets.}
    \label{tbl:exp-mle}
    \small
    \setlength{\tabcolsep}{1.6mm}{
    \begin{threeparttable}
    {
        \begin{tabular}{lcccccc|cc}
            \toprule[0.8pt]
            \multirow{2}{*}{Methods} & {\textbf{Adult}} &{\textbf{Bank}} & \textbf{Cardio} & {\textbf{Churn}} &   {\textbf{Mushroom}} & {\textbf{Beijing}} & {\textbf{Avg. Rank}} \\
            \cmidrule{2-8} 
            & AUC $\uparrow$ & AUC $\uparrow$ &  AUC $\uparrow$ &  AUC $\uparrow$  & AUC $\uparrow$ &  RMSE $\downarrow$ & $-$ \\
            \midrule 
            Real & $.927${\tiny$\pm.000$} & $.935${\tiny$\pm.002$} & $.834${\tiny$\pm.001$}  & $.819${\tiny$\pm.001$}  & $1.00${\tiny$\pm.000$} & $.423${\tiny$\pm.003$} & $-$  \\
            \midrule
            CTGAN & $.861${\tiny$\pm.005$} & $.774${\tiny$\pm.006$} & $.787${\tiny$\pm.002$} & $.792${\tiny$\pm.003$}    & $.781${\tiny$\pm.007$} & $1.01${\tiny$\pm.038$} & $6.33$ \\
            TVAE  & $.873${\tiny$\pm.001$} & $.868${\tiny$\pm.002$} & $.676${\tiny$\pm.009$} & $.793${\tiny$\pm.006$}    & $.999${\tiny$\pm.000$} & $1.05${\tiny$\pm.012$} & $5.17$ \\
            TabCD & $.619${\tiny$\pm.026$} & $.604${\tiny$\pm.021$} & $.765${\tiny$\pm.008$} & $.584${\tiny$\pm.021$}    & $.561${\tiny$\pm.048$} & $1.06${\tiny$\pm.037$} & $8.83$ \\
            TabDDPM & $.910${\tiny$\pm.001$} & $.922${\tiny$\pm.001$} & $.801${\tiny$\pm.001$} & $.806${\tiny$\pm.007$}    & $.999${\tiny$\pm.000$} & $.556${\tiny$\pm.005$} & $ 1.5$ \\
            \midrule
            TabED-Uni & $.884${\tiny$\pm.003$} & $.842${\tiny$\pm.013$} & $.786${\tiny$\pm.002$} & $.810${\tiny$\pm.008$}    & $.998${\tiny$\pm.001$} & $1.04${\tiny$\pm.013$} & $3.83$ \\
            TabED-Grid & $.833${\tiny$\pm.003$} & $.831${\tiny$\pm.004$} & $.791${\tiny$\pm.001$} & $.803${\tiny$\pm.007$}    & $.985${\tiny$\pm.005$} & $.978${\tiny$\pm.015$} & $4.83$ \\
            TabED-Cyc & $.831${\tiny$\pm.005$} & $.823${\tiny$\pm.007$} & $.796${\tiny$\pm.001$} & $.807${\tiny$\pm.007$}    & $.971${\tiny$\pm.004$} & $1.01${\tiny$\pm.024$} & $4.83$ \\
            TabED-Ord & $.853${\tiny$\pm.005$} & $.845${\tiny$\pm.004$} & $.792${\tiny$\pm.002$} & $.806${\tiny$\pm.004$}    & $.926${\tiny$\pm.010$} & $1.02${\tiny$\pm.017$} & $4.83$ \\
            TabED-Str & $.879${\tiny$\pm.004$} & $.819${\tiny$\pm.001$} & - & - & - & $.978${\tiny$\pm.012$} & $3.67$ \\
		\bottomrule[1.0pt] 
		\end{tabular}
    }
    \end{threeparttable}
    }
\end{table}

\paragraph{Real-world Dataset.}
We then evaluate our methods by benchmarking them against various baselines across $6$ real-world datasets. Following \cite{xu2019modeling}, we first split the real datasets into training and testing sets. The generative models are then learned on the real training set, from which synthetic samples of equal size are generated. This synthetic dataset is subsequently used to train a classification/regression XGBoost model, which is evaluated using the real test set.

We compare the performance, as measured by the AUC score for classification tasks and RMSE for regression tasks, against CTGAN, TVAE, \citep{xu2019modeling} and TabDDPM \citep{kotelnikov2023tabddpm} baselines which utilise generative adversarial networks, variational autoencoders, and denoising diffusion probabilistic models, respectively. The results are reported in \cref{tbl:exp-mle}. Here, TabED-Str refers to an ED loss for which the perturbation was chosen with prior knowledge about the structure of the modelled feature, i.e. ordinal and cyclical features were hand-picked. We do not report results for TabED-Str on the Cardio, Churn, and Mushroom datasets, since the state spaces only consist of unstructured features. To compute the average ranking we use the rank of TabED-Uni on these datasets since on unstructured features TabED-Uni and TabED-Str coincide.

The variants of ED show promising results on diverse datasets, thus demonstrating the suitability of ED for EBM training on mixed-state spaces. While TabDDPM outperforms the other approaches, TabED shows comparable performance to the CTGAN and TVAE baselines and outperforms both in average ranking. Furthermore, the contrastive divergence approach performs poorly which highlights its limitations in accurately modelling distributions on mixed state spaces. Surprisingly, the unstructured perturbation TabED-Uni performs slightly better than the structured approaches. This may partially be attributed to the fact that the state spaces of the discrete features are relatively small. Consequently, the uniform perturbation might be a good approximation of maximum likelihood estimation in agreement with \cref{TheoremMLElimit}, while not producing high-variance gradients on these specific datasets.

\begin{wrapfigure}{r}{0.6\linewidth}
\centering
\vspace{-2mm}
\begin{subfigure}{0.48\linewidth}
    \centering
    \includegraphics[width=1.\textwidth]{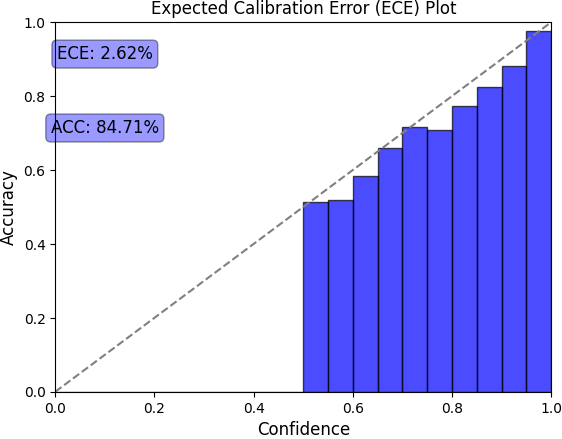}
    \vspace{-3mm}
\end{subfigure}
\begin{subfigure}{0.48\linewidth}
    \centering
    \includegraphics[width=1.\textwidth]{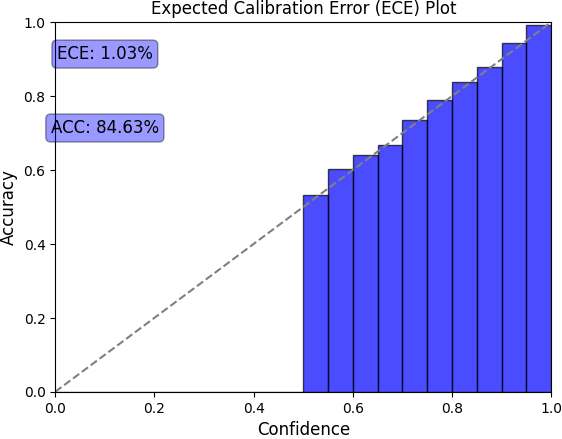}
    \vspace{-3mm}
\end{subfigure}
\caption{Calibration results comparison between the baseline (left) and energy discrepancy (right) on the adult dataset.}
\label{fig:tabular_adult_ece}
\vspace{-4mm}
\end{wrapfigure}
\paragraph{Improving Calibration.}
Despite the improving accuracy of neural-network-based classifiers in recent years, they are also becoming increasingly recognised for their tendency to exhibit poor calibration due to overconfident outputs \citep{guo2017calibration,mukhoti2020calibrating}. Since energy-based model on mixed state spaces can capture the likelihood of tuples of features and target labels, they implicitely quantify the confidence in a prediction and
can be adapted into classifiers with better calibration than deterministic methods. This opens up a new avenue for applying EBMs in deterministic tabular data modelling methods.

Let $y$ and $\x$ be the target label and the rest features in the tabular data, an EBM $U_\theta (\x, y)$ learned on the joint probability $p_\data (\x,y)$ can be transformed into a deterministic classifier: $p_{\text{EBM}}(y|\x) \propto \exp(-U_\theta (\x, y))$. As a baseline for comparison, we additionally train a classifier $p_{\text{CLF}}(y|\x)$ with the same architecture by maximising the conditional likelihood: $\mathbb{E}_{p_\data} [\log p_{\text{CLF}}(y|\x)]$.
Results on the adult dataset can be seen in \cref{fig:tabular_adult_ece}.
We find that the EBM and the baseline exhibit comparable accuracy. However, the baseline model is less calibrated, generating over-confident predictions. In contrast, the EBM learned through ED achieves better calibration, as evidenced by lower expected calibration error \citep{guo2017calibration}. Further details and results are provided in \cref{appendix-sec-tabular}.
\begin{table}[t]
\small
\centering
\caption{Experimental results for discrete image modelling. We report the negative log-likelihood (NLL) on the test set for different models. The results of Gibbs, GWG, and DULA are taken from \cite{zhang2022langevin}, and the result of EB-GFN is from \cite{zhang2022generative}.}
\label{tab:image_logll}
\begin{tabular}{l|cccccc}
\toprule 
Dataset $\backslash$ Method & Gibbs & GWG & EB-GFN & DULA & \EDB & \EDG \\
\midrule 
Static MNIST & $117.17$ & $\textbf{80.01}$ & $102.43$ & $80.71$ & $96.11$ & $90.61$  \\
Dynamic MNIST & $121.19$ & $\textbf{80.51}$ & ${105.75}$ & $81.29$ & $97.12$ & $90.19$ \\
Omniglot & $142.06$ & ${94.72}$ & ${112.59}$ & $145.68$ & $97.57$ & $\bf 93.94$ \\
\bottomrule
\end{tabular}
\vspace{-2mm}
\end{table}
\subsection{Discrete Image Modelling}
In this experiment, we evaluate our methods on high-dimensional binary spaces. Following the settings in \cite{grathwohl2021oops}, we conduct experiments on various image datasets and compare against contrastive divergence using various sampling methods, namely vanilla Gibbs sampling, Gibbs-With-Gradient \cite[GWG]{grathwohl2021oops}, Generative-Flow-Network \cite[GFN]{zhang2022generative}, and Discrete-Unadjusted-Langevin-Algorithm \cite[DULA]{zhang2022langevin}. The training details are provided in \cref{appendix-sec-discrete-image-modelling}. After training, annealed importance sampling \citep{neal2001annealed} is employed to estimate the negative log-likelihood (NLL).

\cref{tab:image_logll} displays the NLLs on the test dataset.  It is evident that energy discrepancy achieves comparable performance to the baseline methods on the Omniglot dataset. Despite the performance gap compared to the contrastive divergence methods on the MNIST dataset, energy discrepancy stands out for its efficiency, requiring only $M$ evaluations of the energy function in parallel (see \cref{tab:image_time_complexity} for the comparison of running time complexity). This represents a significant computational reduction compared to contrastive divergence, which lacks the advantage of parallelisation and involves simulating multiple MCMC steps.
Additionally, our methods show superiority over CD-1 by a substantial margin, as demonstrated in \cref{tab:image_gen_cd-n}, affirming the effectiveness of our approach.
For further insights, we provide visualisations of the generated samples in \cref{fig:sample-ebm-appendix}.

\section{Conclusion and Limitations}
In this paper we extend the training of energy-based models with energy discrepancy to discrete and mixed state spaces in a systematic way. We show that the energy-based model can be learned jointly on continuous and discrete variables and how prior assumptions about the geometry of the underlying discrete space can be utilised in the construction of the loss. Our method achieves promising results on a wide range of discrete modelling applications at a significantly lower computational cost than MCMC-based approaches. To the best of our knowledge, our approach is also the first working training method for energy-based models on tabular data sets, unlocking a wide range of inference applications for tabular data sets beyond the scope of classical joint energy-based models.

\textbf{Limitations:} Similar to prior work on energy discrepancy in continuous spaces \citep{schroeder2023energy}, our training method is sensitive to the assumption that the data distribution is positive on the whole state space. While our method scales to high-dimensional datasets like binary image data, where the positiveness of the data distribution is assumed to be violated due to the manifold hypothesis, the large difference between intrinsic and ambient dimensionality poses challenges to our approach and may explain why energy discrepancy cannot match the performance of contrastive divergence with a large number of MCMC steps on binary image data.

\textbf{Broader Impact:} In principle, our method can be used for imputation and prediction in tabular data sets and can thus have discriminating or excluding effects if used irresponsibly. 

\textbf{Outlook:} For future work, we are interested in extensions to highly structured types of data such as molecules, text, or data arising from networks. So far, our work only considers cyclical and ordinal structures on the discrete space, while incorporating more complex structures as prior information into the rate function may be beneficial. Furthermore, interesting downstream applications ranging from table imputation with confidence bounds, simulation-based inference involving discrete variables, or reweighting of language models with residual EBMs have been left unexplored in this work.

\subsection*{Author Contributions}
TS and ZO conceived the project idea to use an ED loss for the training of EBMs on discrete and mixed data. TS devised the main conceptual ideas, developed the theory, conducted the proofs and implemented the ED loss. ZO contributed to the conceptual ideas and designed and carried out the experiments. ABD supervised the conceptualisation and execution of the research project and contributed proof ideas. ZO, YL, and ABD checked derivations and proofs. 
TS and ZO equally contributed to the writing under the supervision of YL and ABD.

\subsection*{Acknowledgements}
TS was supported by the EPSRC-DTP scholarship partially funded by the Department of Mathematics, Imperial College London. ZO was supported by the Lee Family Scholarship. 
We thank the anonymous reviewer for their comments.


\newpage
\bibliography{main}
\bibliographystyle{icml2022}

\newpage 
\appendix

\begin{center}
\LARGE
\textbf{Appendix for ``Energy-Based Modelling for Discrete and Mixed Data via Heat Equations \\on Structured Spaces''}
\end{center}

\etocdepthtag.toc{mtappendix}
\etocsettagdepth{mtchapter}{none}
\etocsettagdepth{mtappendix}{subsection}
{\small \tableofcontents}
\section{Proofs of the Main Results}\label{sec:proofs}
\subsection{Proof of \cref{TheoremMLElimit}}\label{pf:TheoremMLELimit}
\restatheoremone*
\begin{proof}
For $\mathcal X = \{1, 2, \dots, S\}$ and two probability distributions $p_1$ and $p_2$ on $\mathcal X$ define the marginal distributions
\begin{equation}
    Q_tp_1(y) = \sum_{x\in \mathcal X} q_t(y\vert x) p_1(x) \,, \quad Q_tp_2(y) = \sum_{x\in \mathcal X} q_t(y\vert x) p_2(x)\,.
\end{equation}
By the data-processing inequality it holds for all $p_1, p_2$ with $\infdiv{p_1}{p_2}<\infty$ that
\begin{equation}
    \infdiv{Q_tp_1}{Q_tp_2} \leq c(t) \infdiv{p_1}{p_2} \quad \text{with} \quad c(t)\leq 1\,.
\end{equation}
We are going to bound the contraction rate $c(t)$ by firstly making use strong data processing inequality \citet{raginsky2016strong} which states that the contraction rate can be bounded by the Dobrushin contraction coefficient
\begin{equation}
    c(t) \leq \theta(Q_t) = \sup_{x, x'} \lVert Q_t \delta_x - Q_t \delta_x' \rVert_{\mathrm{TV}}\,.
\end{equation}
Furthermore, since total variation is a metric, the contraction between two points $x, x'$ is upper bounded by the contraction towards the stationary distribution of the CTMC $\pi$:
\begin{equation}
    \sup_{x, x'}\lVert Q_t \delta_x - Q_t \delta_x' \rVert_{\mathrm{TV}} \leq 2\sup_{x} \lVert Q_t \delta_x -  \pi \rVert_{\mathrm{TV}}
\end{equation}
Next, we use \citet[Proposition 3]{diaconis1991geometric} which states that for any $x\in \mathcal X$ and for eigenvalues $0 = \lambda_1(R) \geq \lambda_2(R) \geq \dots \geq \lambda_S(R)$
\begin{equation}
    \lVert Q_t\delta_\x - \pi\rVert_{\mathrm{TV}}^2 \leq \frac{1}{4} \frac{1-\pi(\x)}{\pi(x)}\exp(-2\vert\lambda_2(R)\vert t)\,.
\end{equation}
The stationary distribution for $R^{\mathrm{ord}}, R^{\mathrm{cyc}}, R^{\mathrm{unif}}$ is given by the stationary distribution and hence $(1-\pi(\x))/\pi(x) \leq S$. Taking roots now yields
\begin{equation}
    c(t) \leq \sup_{x, x'}\lVert Q_t \delta_x - Q_t \delta_x' \rVert_{\mathrm{TV}} \leq \sqrt{S} \exp(-\vert\lambda_2(R)\vert t)\,.
\end{equation}
Finally, we conclude as in \citet{schroeder2023energy}:
\begin{align*}
    \infdiv{Q_t p_\data}{Q_t p_\theta} &= \sum_{y\in \mathcal X} \left(\log Q_tp_\data(y) - \log \frac{Q_t p_\theta(y)}{p_\theta(x)} - \log p_\theta(x)\right) Q_tp_\data(y) \\\notag
    &= z_t + \sum_{y\in \mathcal X} U_{q_t, \theta}(y) Q_tp_\data(y) - U_\theta(x) -\log p_{\theta}(x)
\end{align*}
with $U$ independent entropy term $z_t:= \sum_{y\in \mathcal X}Q_tp_\data(y)\log Q_tp_\data(y)$. After taking expectations with respect to $p_\data(x)$ we find
\begin{align*}
    0 &\leq \infdiv{Q_t p_\data}{Q_t p_\theta} \\\notag
    &= z_t + \sum_{y\in \mathcal X} U_{q_t, \theta}(y) Q_tp_\data(y) - \sum_{x\in \mathcal X} U_\theta(x) p_\data(x) - \sum_{x\in \mathcal X}  \log p_{\theta}(x) p_\data(x)\\\notag
    &= z_t - \ED_{q_t}(p_\data, U_\theta) - \mathbb E_{p_\data(x)}[\log p_\theta(x)] \leq c(t)\infdiv{p_\data}{p_\theta}
\end{align*}
\end{proof}

\subsection{Eigenvalue Decomposition of Rate Matrices for \cref{prop:SolutionHeatEquation}}\label{der:eigenvalues}
The rate matrices for the cyclical and for the ordinal graph structures have a similar structure and are referred to as circulant and tridiagonal matrices. The easiest method for deriving the eigenvalue decompositions consists in deriving recurrence relations for the characteristic polynomial. A systematic study of block circulant matrices can be found in \cite{tee2007eigenvectors} and a study of tridiagonal matrices was given in \cite{losonczi1992eigenvalues, yueh2005eigenvalues}. These more general results may be helpful when constructing perturbations for spaces with a more complex structure than the ones introduced in this work. We take already existing results and check that the desired results hold.
\restateproposition*
\begin{proof}
    In the circular case, the identity $R\mathbf v_p = \lambda_p \mathbf v_p$ reduces for a single row $a$ to
    \begin{equation}
        v_{p, a-1} + v_{p, a+1} - 2v_{p, a} = \lambda_p v_{p, a}
    \end{equation}
    For $ v_{p, a}= \exp(-2\pi \iu (p-1)a/S)/\sqrt{S}$ we find after dividing both sides by $v_{p, a}$:
    \begin{align}
        \lambda_p &= \exp\left( -2\pi \iu \frac{(p-1)(a-1) -(p-1)a}{S}\right) + \exp\left( -2\pi \iu \frac{(p-1)(a+1) -(p-1)a}{S}\right) - 2 \\\notag
        & = 2\cos\left(2\pi \frac{p-1}{S}\right) - 2 \,.
    \end{align}
    Furthermore, it is known that the Fourier basis is unitary, i.e. we have for $a, b \in \{1, 2, \dots, S\}$
    \begin{align}
        \sum_{p = 1}^S \bar v_{a, p} v_{b, p} &= \frac{1}{S} \sum_{p = 1}^S \exp\left( -2\pi \iu \frac{(a-b)(p-1)}{S}\right) = \delta(a, b)\,.
    \end{align}
    In the tridiagonal case, we have to study two cases. Firstly, we have to check in row $1$ using $2\cos(x)\cos(y) = \cos(x+y) + \cos(x-y)$:
    \begin{align}
        &v_{p, 2} - v_{p, 1} = \lambda_p v_{p, 1} = 2\cos\left(\frac{2\pi (p-1)}{2S}\right)\cos\left(\frac{\pi (p-1)}{2S}\right) - 2\cos\left(\frac{\pi (p-1)}{2S}\right) \\\notag
        & = \cos\left(\frac{2\pi (p-1)}{2S} + \frac{\pi (p-1)}{2S}\right) + \cos\left(\frac{2\pi (p-1)}{2S} - \frac{\pi (p-1)}{2S}\right) - 2\cos\left(\frac{\pi (p-1)}{2S}\right) \\\notag
        & = \cos\left(\frac{3\pi (p-1)}{2S}\right) - \cos\left(\frac{\pi (p-1)}{2S}\right)\,.
    \end{align}
    and equivalently for row $S$. In all other rows we have $v_{p, a+1} + v_{p, a-1} - 2v_{p, a} = \lambda_p v_{p, a}$ from
    \begin{equation}
      \frac{(2a-1)(p-1)}{2S} \pm \frac{2(p-1)}{2S}  = \frac{(2(a\pm1) - 1)(p-1)}{2S}\,.
    \end{equation}
\end{proof}

\subsection{Proof of Scaling limit in \cref{thm:ScalingLimit}}\label{pf:UniversalScalingLimit}
It is a typical generalisation of the central limit theorem that random walks attain Brownian motion as a universal scaling limit. We reproduce similar arguments for the law of the continuous time Markov chain. Without loss of generality we shift the state space by one and consider the state space $\{0, 1, \dots, S-1\}$ with cyclical and ordinal structure and let $y_t\sim q_t(\cdot \vert x= \mu S)$, where we always assume that the process is initialised at state $\mu S$. Furthermore, we introduce the process $z_t$ which is an unconstrained continuous time Markov chain on $\mathbb Z$ with rate matrix $R_{aa} = -2, R_{a, a+1}=1, R_{a, a-1} = 1$. The constrained process can then be described in terms of the unconstrained one: Let $\varphi_S: \mathbb Z \to \{0, 1, \dots, S-1\}$ with $\varphi_S(2nS + p) = p$ and $\varphi_S((2n + 1)S + p) = -p$ for for $p\in \{0, 1, \dots, S-1\}$ and $n\in \mathbb Z$. Then, $\varphi_S$ reflects the unconstrained process $z_t$ at the boundaries $0$ and $S-1$, i.e. $y_t^{\mathrm{ord}} = \varphi_S(z_t)$ and $y_t^{\mathrm{cyc}} = z_t\,\mathrm{mod}\, S $. For the unconstrained process we define the holding times, i.e. the random time intervals in which the process does not change state
\begin{equation}
    h_a = \inf_{t\geq 0}\{t : z_t\neq x = a\}
\end{equation}
and the jump process $N_t := \sum_{h\leq t} \delta(z_h \neq z_{h-})$, where $z_{h-} = \lim_{s\uparrow h} z_s$ which counts the number of state transitions up to time $t$. It is a standard result that the holding times are exponentially distributed \citep[Proposition 2.8]{anderson2012continuous} $h_a \sim \mathrm{Exp}(-R_{aa})$. Furthermore, since all holding times are identically distributed and $R_{aa} = -2$, the resulting jump process has Poisson distribution
\begin{equation}
    N_t\sim \mathrm{Poisson}(2t)\,.
\end{equation}
With these definitions, we can now first proof the Gaussian limit of $z_t$ and then derive the limit of $y_t$.
\restatetheoremtwo*
\begin{proof}
First, we write $z_t$ as the sum of independent increments starting at $x = \mu S$
\begin{equation}
    z_t = \mu S + \sum_{j=1}^{N_t} J_j\, , \quad p(J_j = 1) = p(J_j = -1) = 1/2\,.
\end{equation}
We can now compute the characteristic function $\chi_S(s) = \mathbb E[\exp(isz_t)]$ of $z_t$ rescaled in space and time. We have the following:
\begin{align}
    \chi_S(s) &= \mathbb E\left[\exp\left( \iu s \frac{y_{S^2t}}{S}\right)\right]\\\notag
    &= \exp(\iu s \mu)\mathbb E\left[\mathbb E\left[\exp\left( \iu s \sum_{j=1}^{N_{S^2t}} J_j /S\right) \right]\Bigg\vert N_{S^2t}\right]\\\notag
    &= \exp(\iu s \mu)\mathbb E\left[ \prod_{j=1}^{N_{S^2t}} \cos\left(\frac{s}{S}\right)\Bigg\vert N_{S^2t}\right] \\\notag
    &= \exp(\iu s \mu)\sum_{K = 0}^\infty \cos\left(\frac{s}{S}\right)^K \frac{(2S^2t)^K \exp(-2S^2t)}{K!} = \exp(\iu s \mu)\exp\left( 2S^2t\cos\left(\frac{s}{S}\right) - 2S^2t \right)
\end{align}
where we used that $\cos(x) = 1/2(\exp(\iu x) + \exp(-\iu x))$ in the third step, the Poisson distribution of $N_t$ in the fourth step, and the series expansion of the exponential function in the final step. Since $\cos(x) \approx 1 - 1/2 x^2$, we now have point-wise for any $s$ and due to the fact that $s/S \to 0$
\begin{equation}
    \chi_S(s) \xrightarrow{S\to\infty} \exp(\iu s\mu -ts^2)
\end{equation}
which is the characteristic function of a Gaussian with variance $2t$ and mean $\mu$. This proves the convergence in distribution of the rescaled unconstrained process $z_t$. Furthermore, for $\varphi^{\mathrm{ord}}$ and $\varphi^{\mathrm{cyc}}$ it holds $\varphi_S(z_t)/S = \varphi(z_t/S)$ for all $S\in \mathbb N$. Furthermore, $\varphi^{\mathrm{ord}}, \varphi^{\mathrm{cyc}}$ are continuous maps from $\mathbb R$ to $[0, 1)$ with reflecting or periodic boundary conditions. We thus have by the continuous mapping theorem
\begin{equation}
    \frac{y_{S^2t}}{S} = \frac{\varphi_S(z_{S^2t})}{S} = \varphi\left(\frac{z_t}{S}\right) \xrightarrow{S\to\infty} \varphi(\xi_t)
\end{equation}
with $\xi_t\sim \mathcal N(\mu, 2t)$.
\end{proof}
\begin{figure}
    \centering
    \includegraphics[width = \textwidth]{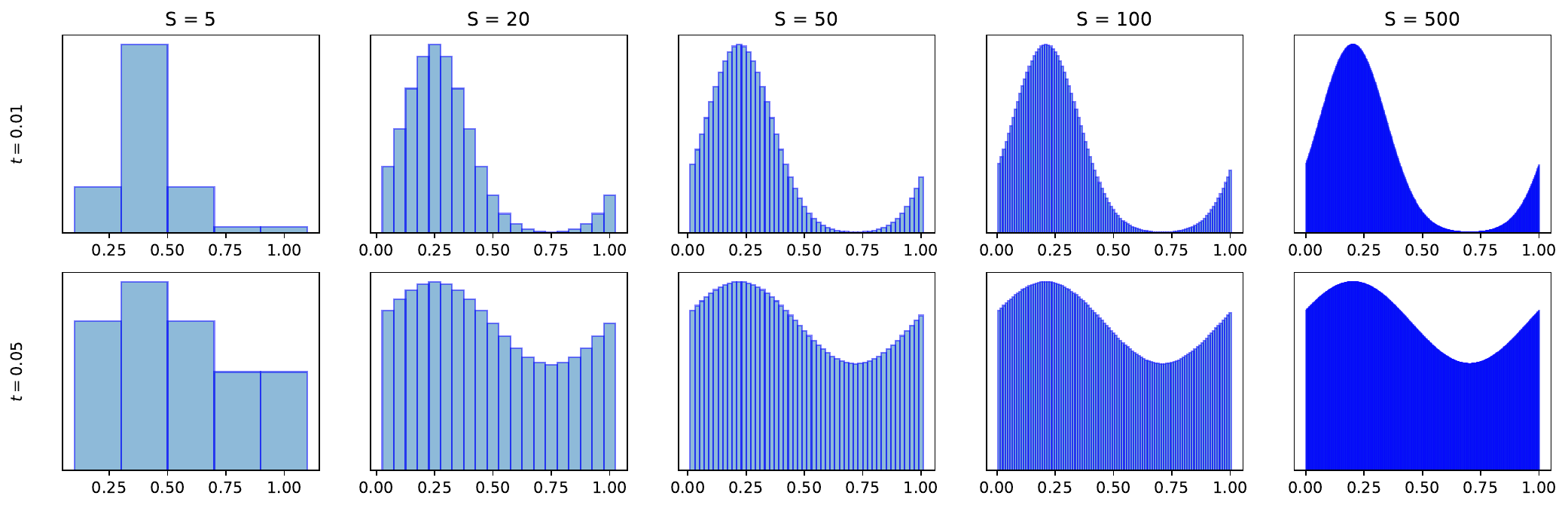}
    \includegraphics[width = \textwidth]{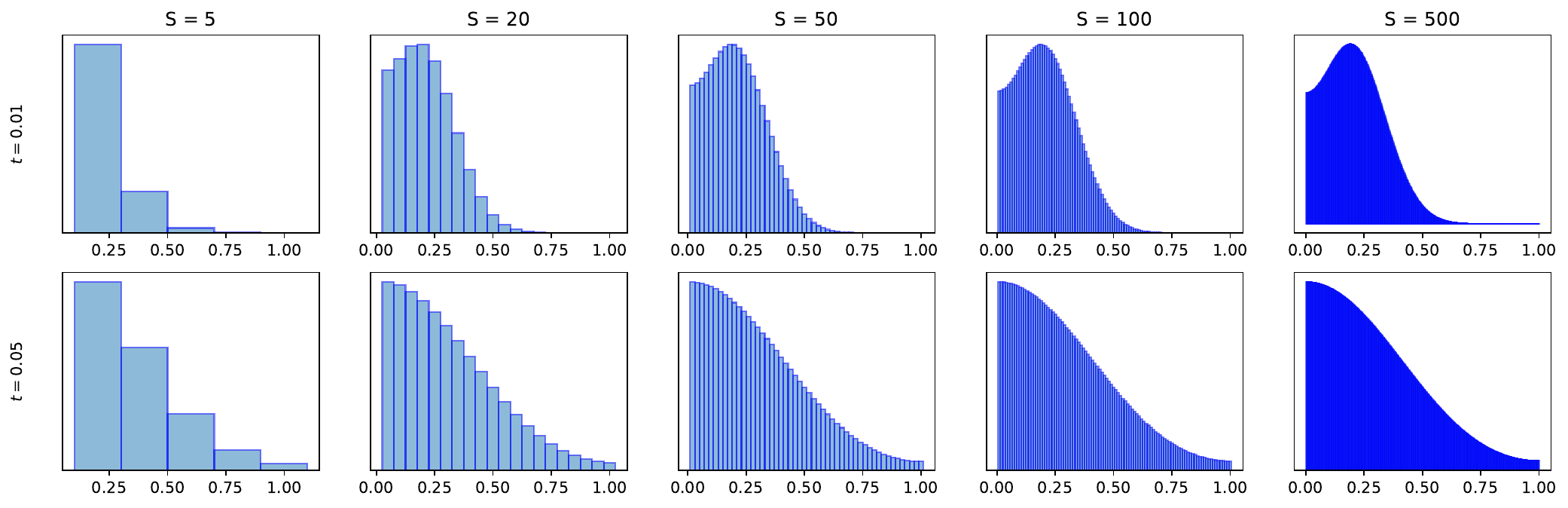}
    \includegraphics[width = \textwidth]{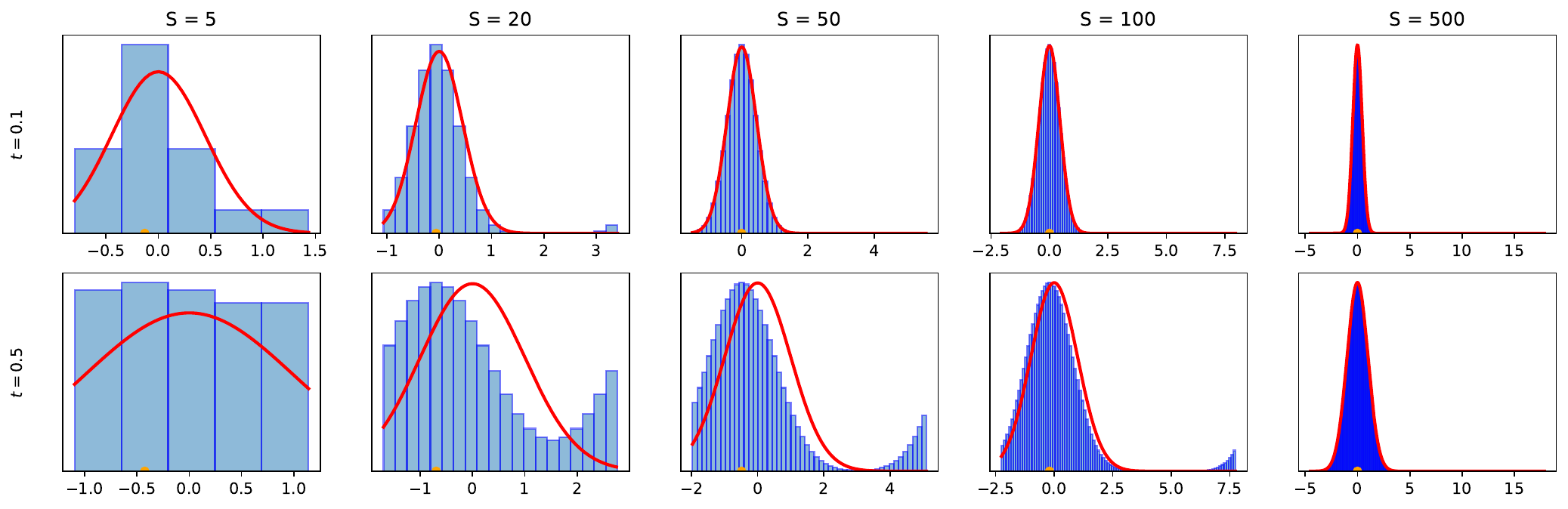}
    \includegraphics[width = \textwidth]{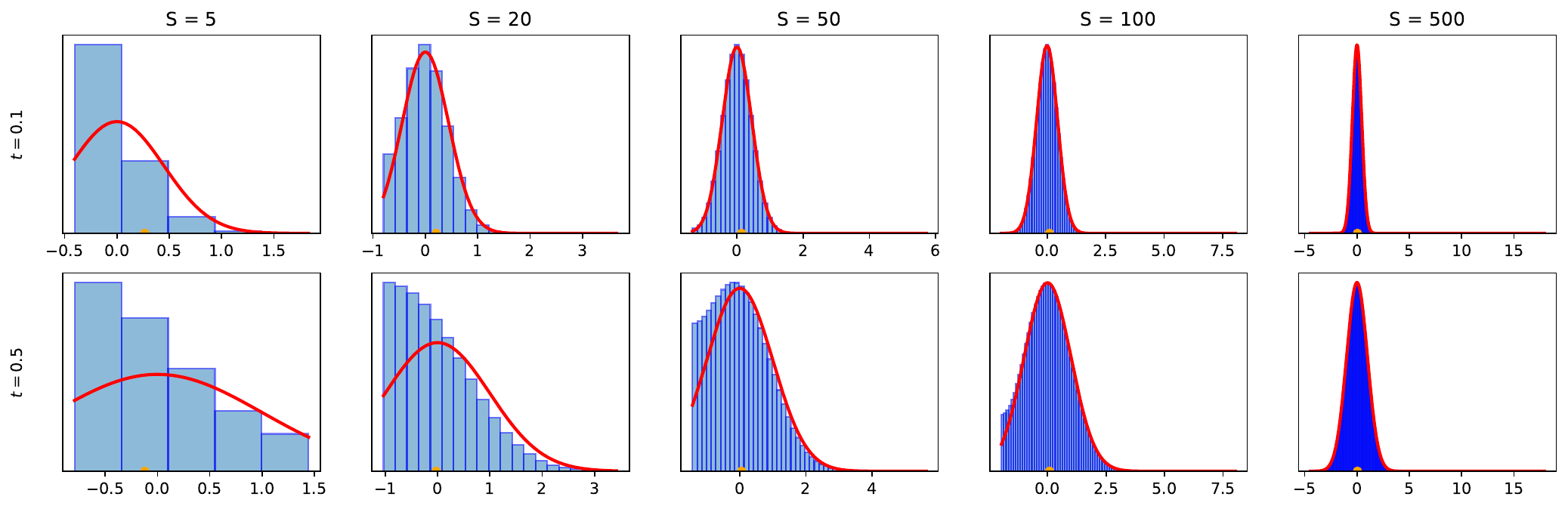}
    \caption{Scaling limit of the introduced perturbations. Top: Convergence of rescaled cyclical and ordinal perturbations $y_{S^2t}/S$ for base time parameters $t= 0.01$ and $t= 0.05$ to Gaussian $[0, 1)$ with non-trivial boundary conditions. One can see that the perturbation converges to a fixed shape on the normalised state space. Bottom: Convergence of rescaled cyclical and ordinal perturbation $(y_{St} - \mathbb E[y_{St}])/\sqrt{S}$ for base time parameters $t= 0.1$ and $t= 0.5$ to Gaussian on $\mathbb R$ (red line). The orange mark indicates the initial state. One can see that the perturbation remains non-trivial as the state space grows to infinity at rate $\sqrt{S}$.}
    \label{fig:scaling_limit}
\end{figure}
\section{Estimation of Energy Discrepancy}\label{app:EstimationED}
The energy discrepancy functional takes the form
\begin{equation}
    \ED_q (p_{\mathrm{data}}, U) := \mathbb{E}_{p_{\mathrm{data}}(\x)} [U(\x)] - \mathbb{E}_{p_{\mathrm{data}}(\x)}\mathbb{E}_{q(\y | \x)} [U_q (\y)]. 
\end{equation}
for a conditional perturbing density $q$ and contrastive potential
\begin{equation}
    U_q (\y) := - \log \sum_{\x'\in\mathcal X} q(\y | \x') \exp(-U(\x'))
\end{equation}
While in theory energy discrepancy yields a valid training functional for energy-based models for almost any choice of conditional distribution $q$, the conditional distribution also needs to allow an estimation of $U_q$ with low variance. This is particularly easy when $q$ is symmetric, i.e. $q(\mathbf y\vert\mathbf x) = q(\mathbf x\vert\mathbf y)$ in which case the contrastive potential can be expressed as an expectation which can readily be approximated from samples. This leads to
\begin{equation}\label{equ:stabilised-loss-function}
    \mathcal L_{q, M, w}(U) := \frac{1}{N} \sum_{i=1}^N \log\left(w+ \sum_{j= 1}^M \exp(U(\x^i) - U({\x}_-^{i,j}))\right) - \log(M)
\end{equation}
with $\x^i \sim p_\data$, $\y^i \sim q_t(\cdot | \x^i)$, and ${\x}_-^{i,j} \sim q_t(\cdot\vert \mathbf \y^i)$, where the offset $w$ stabilises the loss approximation as discussed in \citet{schroeder2023energy}. The interpretation of $w$ is that contributions from negative samples with $U({\x}_-^{i,j}) > U(\x^i)$ are exponentially suppressed as contributions to the loss functional, thus avoiding the energies of negative samples to explode.
\subsection{Binary Case}\label{subsec:Binary}
On binary spaces, the construction of perturbations is particularly simple. We give some details in this subsection.

\textbf{Bernoulli perturbation.} 
As proposed previously in \cite[Appendix B.3]{schroeder2023energy}, $q$ can be defined as a Bernoulli distribution. Specifically, for $\xib \sim \mathrm{Bernoulli}(\varepsilon)^d, \varepsilon \in (0, 1)$ define the Bernoulli perturbed data point as $\y = \x + \xib \,\mod 2$. This induces a symmetric transition density $q(\y|\x)$ on $\{0, 1\}^d$. The Bernoulli random variable $\xib_k$ emulates an indicator function, signifying in each dimension whether to flip the entry of $\x$. The value of $\epsilon$ controls the information loss induced by the perturbation. In theory, larger values of $\epsilon$ lead to a more data-efficient loss, while smaller values of $\epsilon$ may be more practical as they contribute to improved training stability.

It is easy to compute the matrix exponential
\begin{equation}\label{eq:MatrixExponential2D}
    \exp\left( t\begin{pmatrix}
        -1 & 1 \\
        1 & -1
    \end{pmatrix} \right)
    =
    \frac{1}{2}\begin{pmatrix}
        1+e^{-2t} & 1- e^{-2t} \\
        1- e^{-2t} & 1+ e^{-2t}
    \end{pmatrix}\xrightarrow{t\to\infty} \begin{pmatrix}
        1/2 & 1/2 \\
        1/2 & 1/2
    \end{pmatrix}
\end{equation}
thus relating the continuous time Markov chain framework on $\{0, 1\}$ to a Bernoulli perturbation with parameter $0.5*(1- e^{-2t})$.

\textbf{Neighbourhood-based perturbation and grid perturbation.}
Inspired by concrete score matching \citep{Meng2022concreteSM}, one can introduce a perturbation scheme based on neighbourhood maps: $\x \mapsto \mathcal{N}(\x)$, which assigns each data point $\x \in \mathcal{X}$ a set of neighbours $\mathcal{N}(\x)$. In this case, the forward transition density is given by the uniform distribution over the set of neighbours, {\it i.e.,} $q(\y\vert\x) = \frac{1}{|\mathcal{N}(\x)|} \delta_{\mathcal{N}(\x)}(\y)$. A special case is the grid neighbourhood
\begin{equation}
    \mathcal N_{\mathrm{grid}}(\x) = \{ \y\in \{0, 1\}^d\, :\, \y-\x = \pm \e_k,\, k = 1, 2, \dots, d\},
\end{equation}
where $\e_k$ is a vector of zeros with a one in the $k$-th entry. Notably, this neighbourhood structure also exhibits symmetry, {\it i.e.,} $\mathcal N_{\mathrm{grid}}^{-1}(\x) = \mathcal N_{\mathrm{grid}}(\x)$. The same perturbation can be derived from an Euler discretisation of the continuous time Markov chain. On a binary space we have for $t = 1$ for the same rate matrix as in \cref{eq:MatrixExponential2D}
\begin{equation}
    F := \exp(R) \approx \mathrm{id} + \begin{pmatrix}
        -1 & 1 \\
        1 & -1
    \end{pmatrix} = \begin{pmatrix}
        0 & 1 \\
        1 & 0
    \end{pmatrix}
\end{equation}
which is a completely deterministic perturbation which always changes the state of the data point. Combining this with the localisation to a grid we find
\begin{equation}
    q(\mathbf y \vert \mathbf x) = \sum_{k=1}^d \frac{1}{d} F_{yx} = \left\{ \begin{matrix}
        1/d & \y-\x = \pm \e_k \\
        0 & \text{else}
    \end{matrix}\right.
\end{equation}
thus recovering the grid perturbation.
\subsection{Connection to pseudo-likelihood estimation}
Define $q(\mathbf y \vert \mathbf x) = \frac{1}{d} \sum_{k=1}^d \delta( y_k, \Box)\delta(\mathbf y_{\neg k} , \mathbf x_{\neg k})$ which masks exactly one entry of the data vector. For simiplicity, we write $\mathbf y = \mathbf x_{\neg k}$ for the masked state and ommit the $\Box$. Then, energy discrepancy is given for a sampled perturbation $\mathbf y = \mathbf x_{\neg k}$ as
\begin{align}
    &U_\theta(\mathbf x) + \log \sum_{\mathbf x\in \mathcal X} q(\mathbf y = \mathbf x_{\neg k} \vert \mathbf x) \exp(-U_\theta(\mathbf x))\\\notag
    &= -\log \frac{\exp(-U_\theta(\mathbf x))}{\sum_{s \in \{1, 2, \dots, S_k\}}\exp(-U_\theta(x_1, \dots, x_k = s, \dots, x_d))} = -\log p_\theta(x_k \vert \mathbf x_{\neg k})
\end{align}
Hence, this specific ED loss function is indeed a Monte Carlo approximation of pseudo-likelihood. Energy discrepancy offers additional flexibility through the tunable choice of $t$ and $M$, thus making ED adaptable to the structure of the underlying space and more efficient in practice, since the normalisation of the pseudo-likelihood is only computed from $M$ samples and does not require the integration along an entire state space dimension.

\begin{figure*}[!t]
\centering
    \begin{minipage}[t]{0.49\linewidth}
    \centering
    \begin{algorithm}[H]
    \setstretch{1.275}
    \caption{Training} \small
    \label{alg:TabED-training}
    \begin{algorithmic}[1] 
        \STATE perturb the training sample $\x$ using \eqref{eq:app-perturb-dist} \\ $\y \sim q_t(\cdot \vert \x)$
        \STATE generate $M$ negative samples \\ $\x_-^i \sim q_t(\cdot \vert \y), \quad i=1,2,\dots,M$
        \STATE evaluate the energy difference \\ $d_\theta \leftarrow \frac{1}{M} \sum_{i=1}^M \exp(U_\theta (\x) - U_\theta (\x_-^i))$
        \STATE update parameter $\theta$ using \eqref{equ:stabilised-loss-function} \\ $\theta \leftarrow \theta - \eta \nabla_\theta \log (w/M + d_\theta)$
    \end{algorithmic}
    \end{algorithm}
    \end{minipage}
    \begin{minipage}[t]{0.49\linewidth}
    \centering
    \begin{algorithm}[H]
    \caption{Sampling} \small
    \label{alg:TabED-sampling} 
    \begin{algorithmic}[1] 
        \STATE initialise samples \\ $\x^\num \sim \mathcal{N}(0,\mathbf{I})$; $\x^\cat \sim \bigotimes_{k = 1}^{d_\cat} \mathrm{Uniform}(S_k)$
        \FOR{$1,\dots,N$}
        \FOR{$k \!\leftarrow\! 1,\dots, d_\cat$}
            \STATE update numerical features using \eqref{eq:app-sample-num} \\ $\x^\num \!\leftarrow\! \!\!\x^\num \!-\! \frac{\epsilon}{2} U_\theta (\x) \!+\! \sqrt{\epsilon} \boldsymbol{\omega}$, $\boldsymbol{\omega} \!\sim\! \mathcal{N}(0,\mathbf{I})$ \\
            \STATE update categorical features using \eqref{eq:app-sample-cat} \\ $x^\cat_k \leftarrow x^\cat_k \sim p_\theta (x^\cat_k | \x^\num, \x^\cat_{ \neg k})$
        \ENDFOR
        \ENDFOR
    \end{algorithmic}
    \end{algorithm}
    \end{minipage}
\vspace{-1mm}
\caption{The training and sampling procedures of energy discrepancy on tabular data. We use one training sample only to illustrate.}
\vspace{-1mm}
\end{figure*}

\section{Tabular Data Synthesising with Energy-Based Models} \label{sec:app-ebm-tabular}
In this section, we introduce how to use energy discrepancy for training an energy-based model on tabular data.
Let $d_\num$ and $d_\cat$ be the number of numerical columns and categorical columns, respectively. Each row in the table is a data point represented as a vector of numerical features and categorical features $\x = [\x^\num, \x^\cat]$, where $\x^\num \in \mathbb{R}^{d_\num}$ and $\x^\cat \in \bigotimes_{k = 1}^{d_\cat} \{1, \dots, S_k\}$. 
To train an EBM with energy discrepancy, one should define the perturbation methods, which can be done by solving the differential equation in \eqref{eq:pde-heat-diffuion}. For the numerical features, we choose the Gaussian perturbation as in \cite{schroeder2023energy}, which has the transition probability in the form of 
\begin{align}
    q^\num_t (\y^\num | \x^\num) = \mathcal{N}(\y^\num | \x^\num, t \mathbf{I}). \nonumber
\end{align}
For the categorical features, we perturb each attribution independently:
\begin{align}
    q^\cat_t (\y^\cat | \x^\cat) = \prod_{k=1}^{d_\cat} q_t (y^\cat_k | x^\cat_k). \nonumber
\end{align}
Accordingly, there are three different perturbation methods for $q_t (y^\cat_k | x^\cat_k)$:
\begin{itemize}
    \item Uniform perturbation defined in \eqref{equ:uniform_perturbation_estimation}:
    \begin{align}
        q^{\mathrm{uni}}_t(y^\cat_k \vert x^\cat_k) = (1 - S_k t)\delta(y^\cat_k, x^\cat_k) + t\sum_{k\neq x^\cat_k} \delta(y^\cat_k, x^\cat_k) \nonumber
    \end{align}
    \item Cyclical perturbation defined in \eqref{eq:cyclical_perturbation_estimation}:
    \begin{align}
        q^\mathrm{cyc}_t(y^\cat_k \vert x^\cat_k) = \frac{1}{S}\sum_{p=1}^S \exp(2\pi \mathrm i y^\cat_k\omega^\mathrm{cyc}_p)\, \exp\big((2\cos(2\pi \omega^\mathrm{cyc}_p) - 2) t\big)\,\exp(-2\pi \mathrm i x^\cat_k\omega^\mathrm{cyc}_p) \nonumber
    \end{align}
    \item Ordinal perturbation defined in \eqref{eq:ordinal_perturbation_estimation}:
    \begin{align}
        \!\!\!\!\!\!\!\!\!\!\!\!\!\!\!\!\!\!\!\!\!q^\mathrm{ord}_t(y^\cat_k \vert x^\cat_k) \!=\! \frac{2}{S} \sum_{p=1}^S \frac{1}{z_p}\, \!\!\cos((2y^\cat_k \!-\! 1)\pi \omega^\mathrm{ord}_p) \,\exp\!\big(\!(2\cos(2\pi \omega^\mathrm{ord}_p) \!-\! 2) t\big)\, \!\!\cos((2x^\cat_k \!-\! 1)\pi\omega^\mathrm{ord}_p) \nonumber
    \end{align}
\end{itemize}

To reduce the scale of noise, we further introduce grid perturbation, which involves perturbing only one attribute at a time
\begin{align}
    q^\cat_t (\y^\cat | \x^\cat) = \frac{1}{d_\cat} \sum_{k=1}^{d_\cat} q_t (y^\cat_k | x^\cat_k). \nonumber
\end{align}
Theoretically, grid perturbation can be used alongside any type of perturbation described in 
(\ref{equ:uniform_perturbation_estimation},~\ref{eq:cyclical_perturbation_estimation},~\ref{eq:ordinal_perturbation_estimation}).
In our experimental studies, we only explore the combination of grid perturbation with uniform perturbation. By combining the Gaussian perturbation and categorical perturbation together, we can then draw the negative samples $\x_-$ via $\y \sim q_t (\cdot | \x)$, and $\x_- \sim q_t(\cdot | \y)$, where
\begin{align} \label{eq:app-perturb-dist}
    q_t (\y | \x) =  q^\num_{t_\num} (\y^\num | \x^\num) q^\cat_{t_\cat} (\y^\cat | \x^\cat).
\end{align}
Therefore, the energy function $U_\theta$ can be learned by minimising the loss function in \eqref{equ:stabilised-loss-function}. We summarise the training procedure in \cref{alg:TabED-training}.

After training, new tabular data is synthesised by alternately applying Langevin dynamics and Gibbs sampling. Specifically, we update the numerical feature $\x^\num$ via Langevin dynamics
\begin{align} \label{eq:app-sample-num}
    \x^\num \leftarrow \x^\num - \frac{\epsilon}{2} \nabla_{\x^\num} U_\theta ([\x^\num, \x^\cat]) + \sqrt{\epsilon} \boldsymbol{\omega}, \quad \boldsymbol{\omega} \sim \mathcal{N}(0, \mathbf{I})
\end{align}
For the categorical feature $\x^\cat$, we employ Gibbs sampler
\begin{align} \label{eq:app-sample-cat}
    x^\cat_k \sim p_\theta (x^\cat_k | \x^\num, \x^\cat_{ \neg k}) \propto \exp (-U_\theta ([\x^\num, x^\cat_k, \x^\cat_{ \neg k}])), \quad k=1,2,\dots,d_\cat,
\end{align}
where $\x^\cat_{ \neg k}$ denotes the vector $\x^\cat$ excluding the $k$-th attribute.
The sampling procedure is summarised in \cref{alg:TabED-sampling}.

\begin{figure}[!t]
    \centering
    \begin{minipage}[t]{0.05\linewidth}
        \centering
        \raisebox{-.5\height}{\includegraphics[width=.48in]{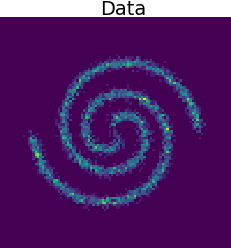}}
    \end{minipage}
    \begin{minipage}[t]{0.25\linewidth}
        \centering
        \includegraphics[width=.48in]{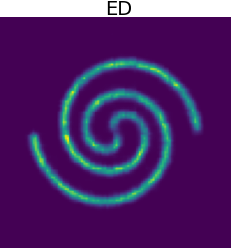}
        \includegraphics[width=.48in]{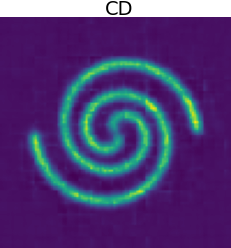}
        \includegraphics[width=.48in]{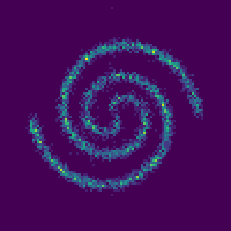}
        \includegraphics[width=.48in]{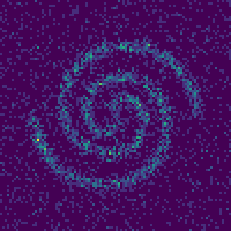}
    \end{minipage}
    \begin{minipage}[t]{0.05\linewidth}
        \centering
        \raisebox{-.5\height}{\includegraphics[width=.48in]{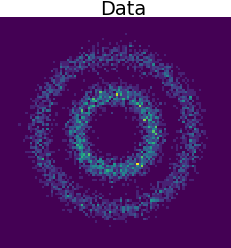}}
    \end{minipage}
    \begin{minipage}[t]{0.25\linewidth}
        \centering
        \includegraphics[width=.48in]{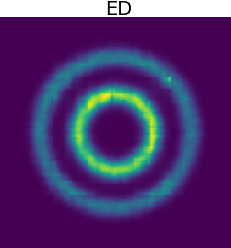}
        \includegraphics[width=.48in]{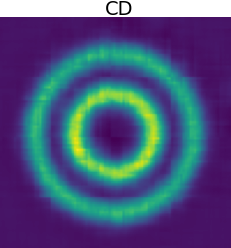}
        \includegraphics[width=.48in]{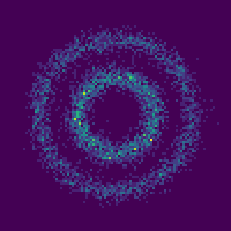}
        \includegraphics[width=.48in]{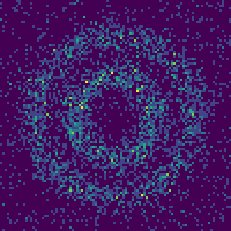}
    \end{minipage}
    \begin{minipage}[t]{0.05\linewidth}
        \centering
        \raisebox{-.5\height}{\includegraphics[width=.48in]{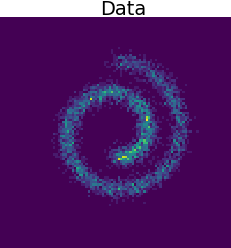}}
    \end{minipage}
    \begin{minipage}[t]{0.25\linewidth}
        \centering
        \includegraphics[width=.48in]{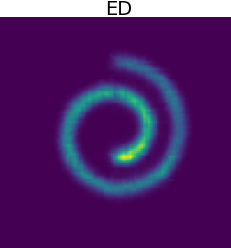}
        \includegraphics[width=.48in]{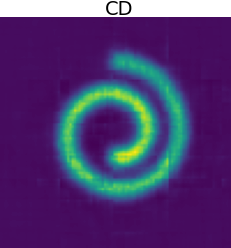}
        \includegraphics[width=.48in]{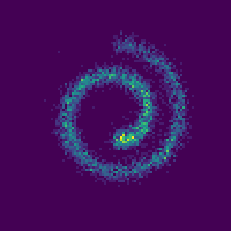}
        \includegraphics[width=.48in]{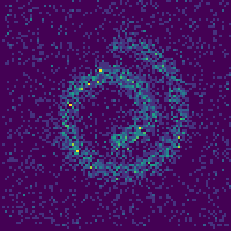}
    \end{minipage}\\
    \begin{minipage}[t]{0.05\linewidth}
        \centering
        \raisebox{-.5\height}{\includegraphics[width=.48in]{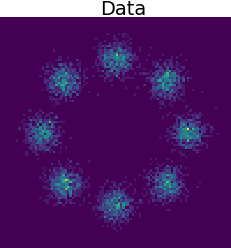}}
    \end{minipage}
    \begin{minipage}[t]{0.25\linewidth}
        \centering
        \includegraphics[width=.48in]{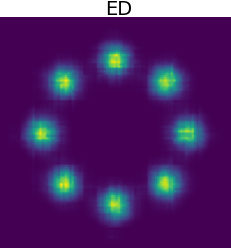}
        \includegraphics[width=.48in]{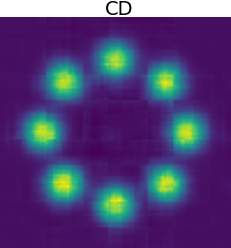}
        \includegraphics[width=.48in]{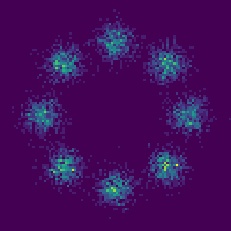}
        \includegraphics[width=.48in]{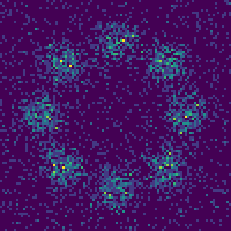}
    \end{minipage}
    \begin{minipage}[t]{0.05\linewidth}
        \centering
        \raisebox{-.5\height}{\includegraphics[width=.48in]{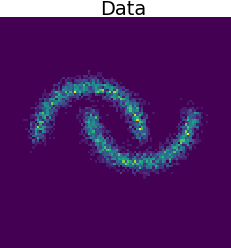}}
    \end{minipage}
    \begin{minipage}[t]{0.25\linewidth}
        \centering
        \includegraphics[width=.48in]{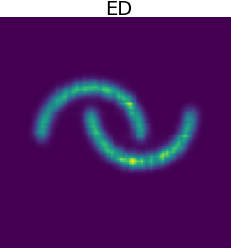}
        \includegraphics[width=.48in]{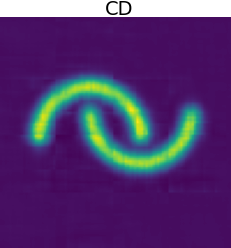}
        \includegraphics[width=.48in]{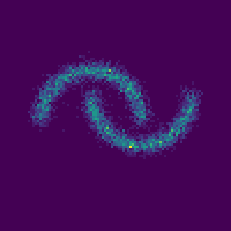}
        \includegraphics[width=.48in]{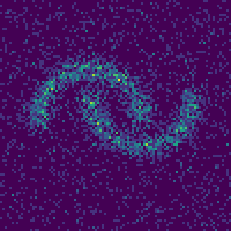}
    \end{minipage}
    \begin{minipage}[t]{0.05\linewidth}
        \centering
        \raisebox{-.5\height}{\includegraphics[width=.48in]{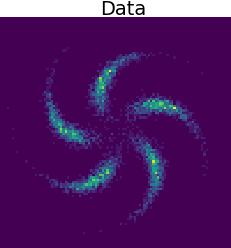}}
    \end{minipage}
    \begin{minipage}[t]{0.25\linewidth}
        \centering
        \includegraphics[width=.48in]{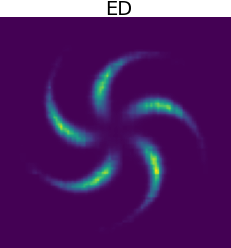}
        \includegraphics[width=.48in]{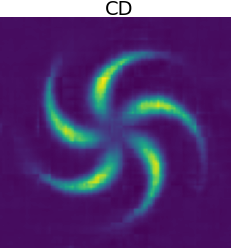}
        \includegraphics[width=.48in]{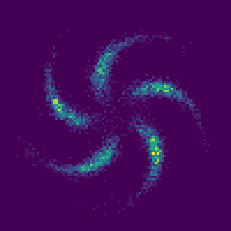}
        \includegraphics[width=.48in]{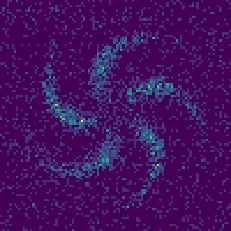}
    \end{minipage}
    \caption{Additional density estimation results on the dataset with $16$ dimensions and $5$ states.}
    \label{fig:appendix-exp-synthetic-5states}
\end{figure}

\begin{figure}[!t]
    \centering
    \begin{minipage}[t]{0.05\linewidth}
        \centering
        \raisebox{-.5\height}{\includegraphics[width=.48in]{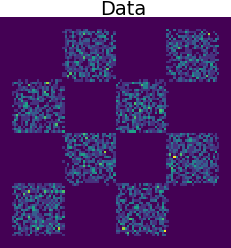}}
    \end{minipage}
    \begin{minipage}[t]{0.25\linewidth}
        \centering
        \includegraphics[width=.48in]{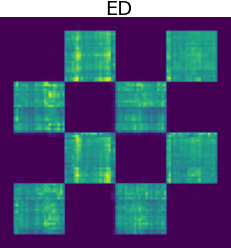}
        \includegraphics[width=.48in]{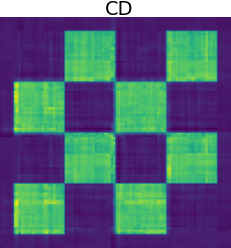}
        \includegraphics[width=.48in]{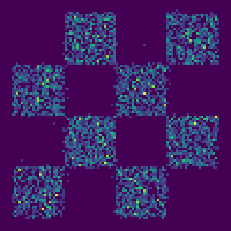}
        \includegraphics[width=.48in]{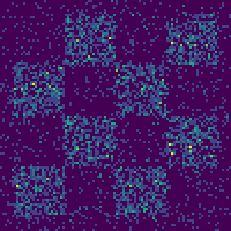}
    \end{minipage}
    \begin{minipage}[t]{0.05\linewidth}
        \centering
        \raisebox{-.5\height}{\includegraphics[width=.48in]{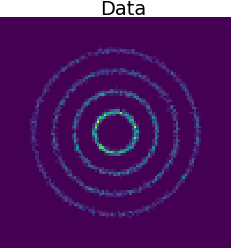}}
    \end{minipage}
    \begin{minipage}[t]{0.25\linewidth}
        \centering
        \includegraphics[width=.48in]{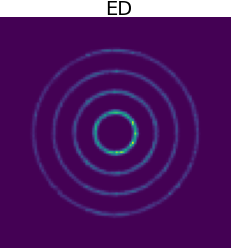}
        \includegraphics[width=.48in]{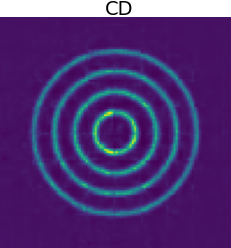}
        \includegraphics[width=.48in]{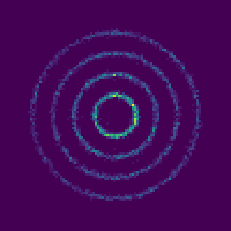}
        \includegraphics[width=.48in]{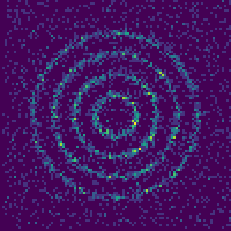}
    \end{minipage}
    \begin{minipage}[t]{0.05\linewidth}
        \centering
        \raisebox{-.5\height}{\includegraphics[width=.48in]{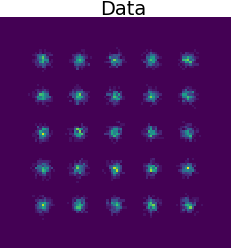}}
    \end{minipage}
    \begin{minipage}[t]{0.25\linewidth}
        \centering
        \includegraphics[width=.48in]{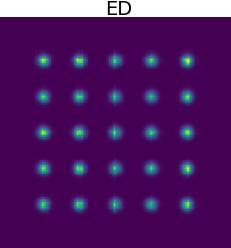}
        \includegraphics[width=.48in]{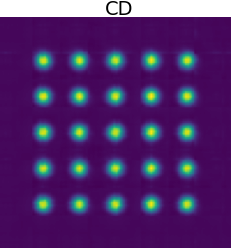}
        \includegraphics[width=.48in]{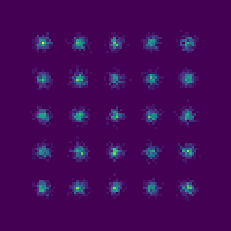}
        \includegraphics[width=.48in]{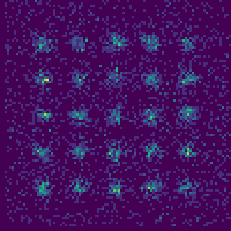}
    \end{minipage}\\
    \begin{minipage}[t]{0.05\linewidth}
        \centering
        \raisebox{-.5\height}{\includegraphics[width=.48in]{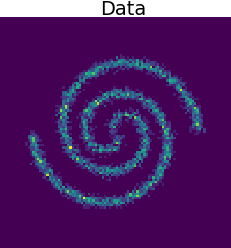}}
    \end{minipage}
    \begin{minipage}[t]{0.25\linewidth}
        \centering
        \includegraphics[width=.48in]{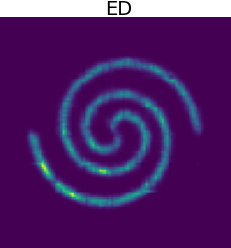}
        \includegraphics[width=.48in]{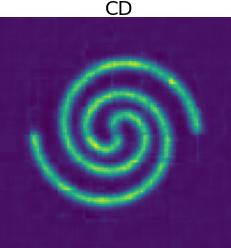}
        \includegraphics[width=.48in]{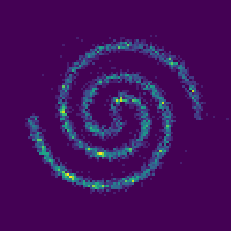}
        \includegraphics[width=.48in]{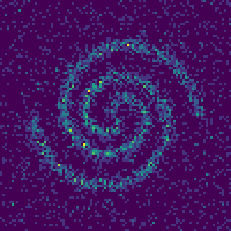}
    \end{minipage}
    \begin{minipage}[t]{0.05\linewidth}
        \centering
        \raisebox{-.5\height}{\includegraphics[width=.48in]{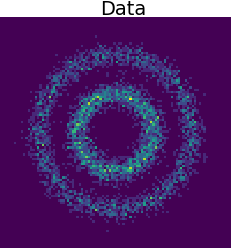}}
    \end{minipage}
    \begin{minipage}[t]{0.25\linewidth}
        \centering
        \includegraphics[width=.48in]{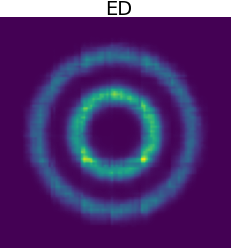}
        \includegraphics[width=.48in]{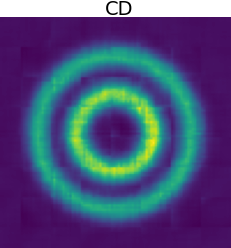}
        \includegraphics[width=.48in]{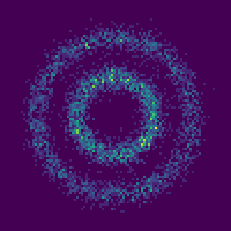}
        \includegraphics[width=.48in]{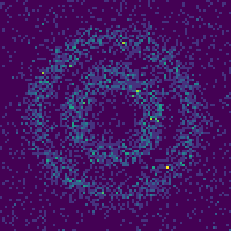}
    \end{minipage}
    \begin{minipage}[t]{0.05\linewidth}
        \centering
        \raisebox{-.5\height}{\includegraphics[width=.48in]{figures/density_estimation/vc5_swissroll_data.png}}
    \end{minipage}
    \begin{minipage}[t]{0.25\linewidth}
        \centering
        \includegraphics[width=.48in]{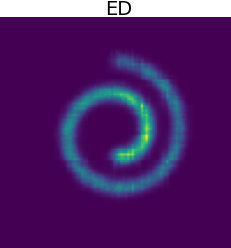}
        \includegraphics[width=.48in]{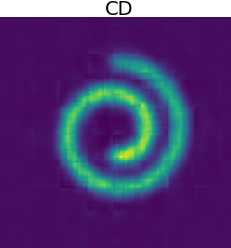}
        \includegraphics[width=.48in]{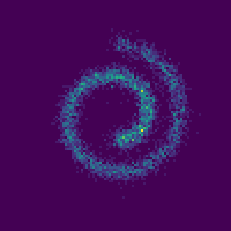}
        \includegraphics[width=.48in]{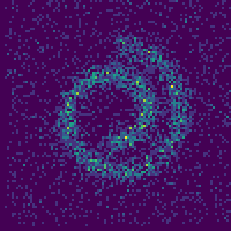}
    \end{minipage}\\
    \begin{minipage}[t]{0.05\linewidth}
        \centering
        \raisebox{-.5\height}{\includegraphics[width=.48in]{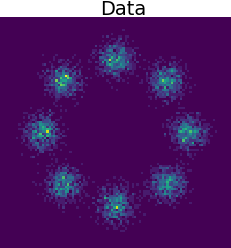}}
    \end{minipage}
    \begin{minipage}[t]{0.25\linewidth}
        \centering
        \includegraphics[width=.48in]{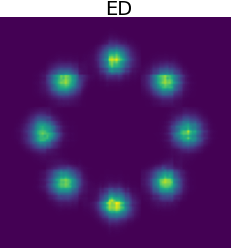}
        \includegraphics[width=.48in]{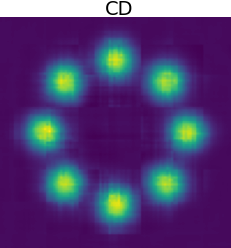}
        \includegraphics[width=.48in]{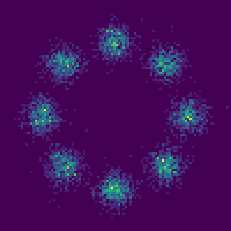}
        \includegraphics[width=.48in]{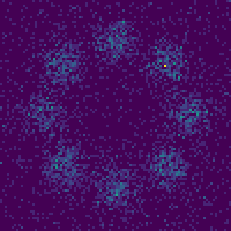}
    \end{minipage}
    \begin{minipage}[t]{0.05\linewidth}
        \centering
        \raisebox{-.5\height}{\includegraphics[width=.48in]{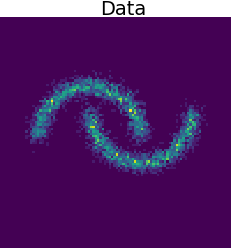}}
    \end{minipage}
    \begin{minipage}[t]{0.25\linewidth}
        \centering
        \includegraphics[width=.48in]{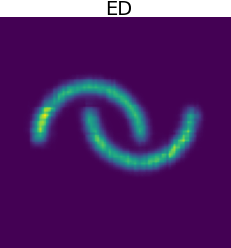}
        \includegraphics[width=.48in]{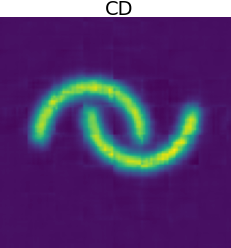}
        \includegraphics[width=.48in]{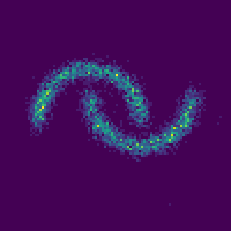}
        \includegraphics[width=.48in]{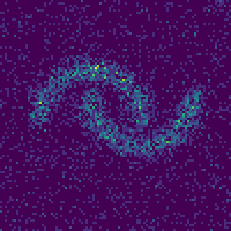}
    \end{minipage}
    \begin{minipage}[t]{0.05\linewidth}
        \centering
        \raisebox{-.5\height}{\includegraphics[width=.48in]{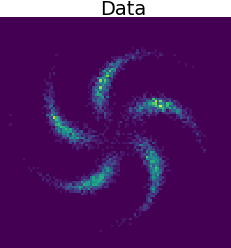}}
    \end{minipage}
    \begin{minipage}[t]{0.25\linewidth}
        \centering
        \includegraphics[width=.48in]{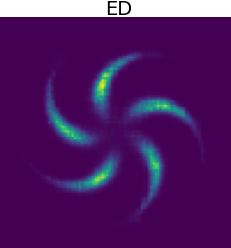}
        \includegraphics[width=.48in]{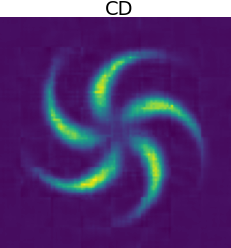}
        \includegraphics[width=.48in]{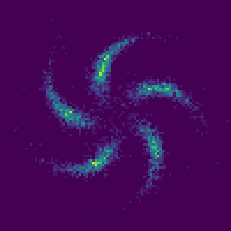}
        \includegraphics[width=.48in]{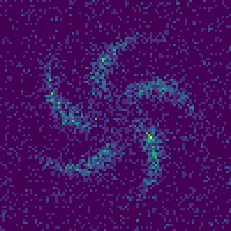}
    \end{minipage}
    \caption{Additional density estimation results on the dataset with $12$ dimensions and $10$ sates.}
    \label{fig:appendix-exp-synthetic-10states}
\end{figure}

\begin{table*}[!t]
\small
\setlength{\tabcolsep}{1.5mm}
\centering
\caption{
Experimental results of discrete density estimation. We display the negative log-likelihood (NLL). The results of baselines are taken from \cite{zhang2022generative}.
}
\label{tab:synthetic_nll}
\begin{tabular}{c|l|ccccccc}
\toprule
Metric & Method & 2spirals & 8gaussians & circles & moons & pinwheel & swissroll & checkerboard \\
\midrule
\multirow{6}{*}{NLL$\downarrow$} 
& PCD    &  $20.094$ & $19.991$ &$20.565$&$19.763$&$19.593$&$20.172$&$21.214$ \\
& ALOE+  &  $20.062$ & $19.984$ & $20.570$ & $19.743$ & $19.576$ & $20.170$ & $21.142$\\
& EB-GFN & ${20.050}$ & ${19.982}$ & $\textbf{20.546}$ & ${19.732}$ & $\textbf{19.554}$ & ${20.146}$ & ${20.696}$ \\
\arrayrulecolor{black!40}\cmidrule{2-9}
& \EDB & $\textbf{20.039}$ & ${19.992}$ & ${20.601}$ & $\textbf{19.710}$ & ${19.568}$ & $\textbf{20.084}$ & ${20.679}$ \\
& \EDG & ${20.049}$ & $\textbf{19.965}$ & ${20.601}$ & ${19.715}$ & ${19.564}$ & ${20.088}$ & $\textbf{20.678}$ \\
\bottomrule
\end{tabular}
\end{table*}

\begin{table*}[!t]
\small
\setlength{\tabcolsep}{1.5mm}
\centering
\caption{
Experimental results of discrete density estimation.
We display the MMD (in units of $1\times 10^{-4}$). The results of baselines are taken from \cite{zhang2022generative}.
}
\label{tab:synthetic_mmd}
\begin{tabular}{c|l|ccccccc}
\toprule
Metric & Method & 2spirals & 8gaussians & circles & moons & pinwheel & swissroll & checkerboard \\
\midrule
\multirow{6}{*}{MMD$\downarrow$} 
& PCD    &  $2.160$&$0.954$&$0.188$&$0.962$&$0.505$&$1.382$& $2.831$ \\
& ALOE+  &  $ {0.149} $ & ${0.078}$ & $0.636$ & $0.516$ & $1.746$ & $0.718$ & $12.138$ \\
& EB-GFN &  $0.583$ & $0.531$ & ${0.305}$ & ${0.121}$ & ${0.492}$ & ${0.274}$ & $\textbf{1.206}$\\ 
\arrayrulecolor{black!40}\cmidrule{2-9}
& \EDB & ${0.120}$ & ${0.014}$ & ${0.137}$ & $\textbf{-0.088}$ & $\textbf{0.046}$ & $\textbf{0.045}$ & ${1.541}$ \\
& \EDG & $\textbf{0.097}$ & $\textbf{-0.066}$ & $\textbf{0.022}$ & ${0.018}$ & ${0.351}$ & ${0.097}$ & ${2.049}$ \\
\bottomrule
\end{tabular}
\end{table*}

\section{Additional Experimental Results} \label{sec-appendix-experiments}
In this section, we present the detailed experimental settings and additional results. All experiments are conducted on a single Nvidia RTX 3090 GPU with 24GB of VRAM.

\subsection{Discrete Density Estimation} \label{appendix-sec-discrete-density-estimation}

\textbf{Experimental Details.}
This experiment keeps a consistent setting with \cite{dai2020learning}. We first generate 2D floating-points from a continuous distribution $\hat{p}$ which lacks a closed form but can be easily sampled. Then, each sample $\hat{\x} := [\hat{\x}_1, \hat{\x}_2] \in \mathbb{R}^2$ is converted to a discrete data point $\x \in \{0,1\}^{32}$ using Gray code. To be specific, given $\hat{\x} \sim \hat{p}$, we quantise both $\hat{\x}_1$ and $\hat{\x}_2$ into $16$-bits binary representations via Gray code \citep{gray1953pulse}, and concatenate them together to obtain a $32$-bits vector $\x$. As a result, the probabilistic mass function in the discrete space is $p(\x) \propto \hat{p} \left( \left[ \operatorname{GrayToFloat}(\x_{1:16}), \operatorname{GrayToFloat}(\x_{17:32}) \right] \right)$. 
To extend datasets beyond binary cases, we adhere to the same protocol but utilise base transformation instead. 
This transformation enables the conversion of floating-point coordinates into discrete variables with different state sizes.
It is important to highlight that learning EBMs in such discrete spaces presents challenges due to the highly non-linear characteristics of both the Gray code and base transformation.

We parameterise the energy function using a $4$ layer MLP with $256$ hidden dimensions and Swish \citep{ramachandran2017searching} activation. To train the EBM, we adopt the Adam optimiser with a learning rate of $0.0001$ and a batch size of $128$ to update the parameter for $10^5$ steps. For the energy discrepancy, we choose $w=1, M=32$ and the grid perturbation for all variants. For contrastive divergence, we employ short-run MCMC using Gibbs sampling with $10$ rounds (i.e., $10 * S$ steps).

After training, we quantitatively evaluate all methods using the negative log-likelihood (NLL) and the maximum mean discrepancy (MMD). To be specific, the NLL metric is computed based on $4,000$ samples drawn from the data distribution, and the normalisation constant is estimated using importance sampling with $1,000,000$ samples drawn from a variational Bernoulli distribution with $p=0.5$. For the MMD metric, we follow the setting in \cite{zhang2022generative}, which adopts the exponential Hamming kernel with $0.1$ bandwidth. Moreover, the reported performances are averaged over 10 repeated estimations, each with $4,000$ samples, which are drawn from the learned energy function via Gibbs sampling.

\textbf{Qualitative Results.}
In \cref{fig:appendix-exp-synthetic-5states,fig:appendix-exp-synthetic-10states}, we present additional qualitative results of the learned energy on datasets with $5$ and $10$ states. We see that ED consistently yields more accurate energy landscapes compared to CD. 
Notably, we only showcase results using grid perturbation with the uniform rate matrix, as qualitative findings are consistent across different perturbation methods.
Additionally, we empirically observe that gradient-based Gibbs sampling methods \citep{grathwohl2021oops,zhang2022langevin} tend to generate samples outside the data support more readily. In this regard, we only display the results of CD methods using vanilla Gibbs sampling.

\textbf{Quantitative Results.}
The quantitative results are illustrated in \cref{tab:synthetic_nll,tab:synthetic_mmd}, indicating the superior performance of our approaches in most scenarios. Notably, previous studies on discrete EBM modelling exclusively focus on binary cases. As a result, we only present the quantitative comparison for the dataset with 2 states.

\begin{table}[!t] 
    \centering
    \caption{Statistics of the real-world datasets.} 
    \label{tbl:appendix-dataset-stat}
    \small
    \begin{threeparttable}
    \resizebox{\textwidth}{!}{
	\begin{tabular}{lcccccccc}
            \toprule
            \textbf{Dataset} & \# Rows  & \# Num & \# Cat & \# Train & \# Validation & \# Test & Task Type  \\
            \midrule 
            \textbf{\href{https://archive.ics.uci.edu/dataset/2/adult}{Adult}} & $48,842$ & $6$ & $9$ & $28,943$ & $3,618$ & $16,281$ & Binary Classification  \\
            \textbf{\href{https://archive.ics.uci.edu/dataset/222/bank+marketing}{Bank}} & $45,211$ & $7$ & $10$ & $36,168$  & $4,521$ & $4,522$ & Binary Classification \\
            \textbf{\href{https://www.kaggle.com/datasets/sulianova/cardiovascular-disease-dataset}{Cardio}} & $70,000$ & $5$ & $6$ & $56,000$  & $7,000$ & $7,000$ & Binary Classification \\
            \textbf{\href{https://www.kaggle.com/datasets/blastchar/telco-customer-churn}{Churn}} & $7,043$ & $3$ & $15$ & $5,634$  & $704$ & $705$ & Binary Classification \\
            \textbf{\href{https://archive.ics.uci.edu/dataset/848/secondary+mushroom+dataset}{Mushroom}} & $61,096$ & $3$ & $17$ & $48,855$  & $6,107$ & $6,107$ & Binary Classification \\
            \textbf{\href{https://archive.ics.uci.edu/dataset/381/beijing+pm2+5+data}{Beijing}} & $43,824$ & $7$ & $5$ & $35,058$  & $4,383$ & $4,383$ &  Regression   \\
		\bottomrule
		\end{tabular}
    }        
  \end{threeparttable}
\end{table}

\begin{figure}[!t]
    \centering
    {
        \begin{minipage}{0.235\textwidth}
            \centering
            \includegraphics[width=1.\textwidth]{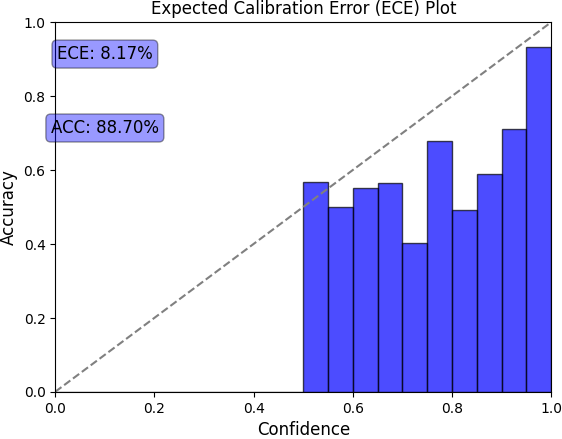}
        \end{minipage}
        \begin{minipage}{0.235\textwidth}
            \centering
            \includegraphics[width=1.\textwidth]{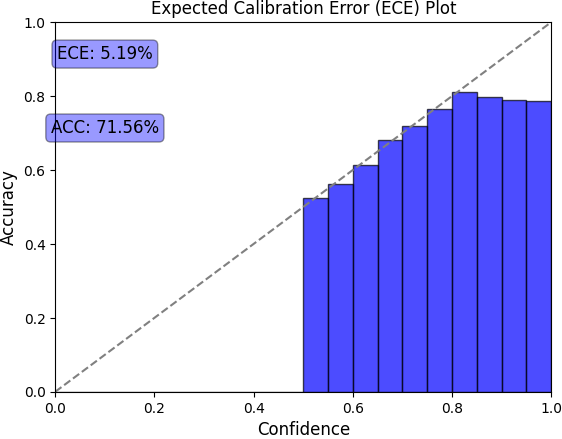}
        \end{minipage}
        \begin{minipage}{0.235\textwidth}
            \centering
            \includegraphics[width=1.\textwidth]{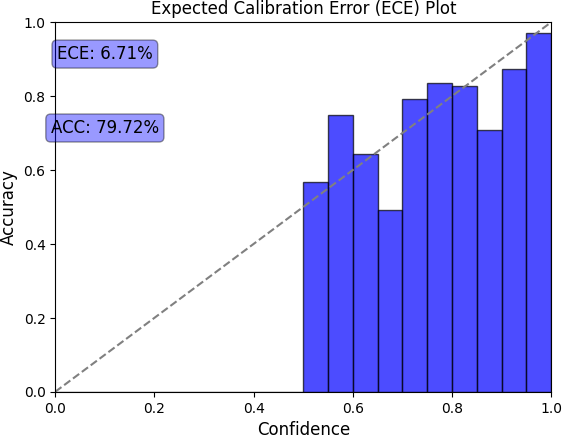}
        \end{minipage}
        \begin{minipage}{0.235\textwidth}
            \centering
            \includegraphics[width=1.\textwidth]{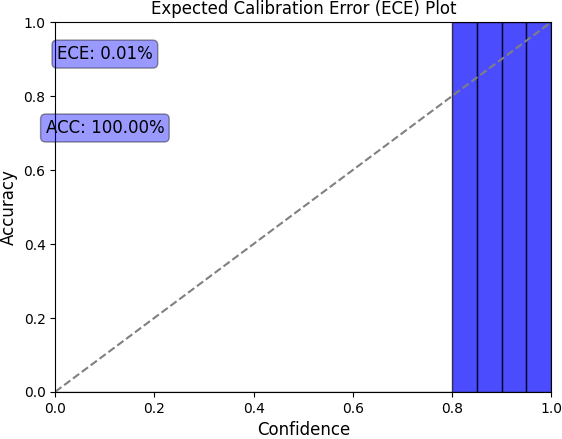}
        \end{minipage}
    }
    {
        \begin{minipage}{0.235\textwidth}
            \centering
            \includegraphics[width=1.\textwidth]{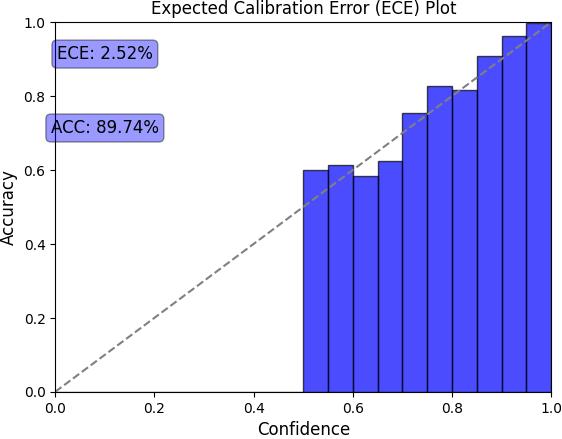}
        \end{minipage}
        \begin{minipage}{0.235\textwidth}
            \centering
            \includegraphics[width=1.\textwidth]{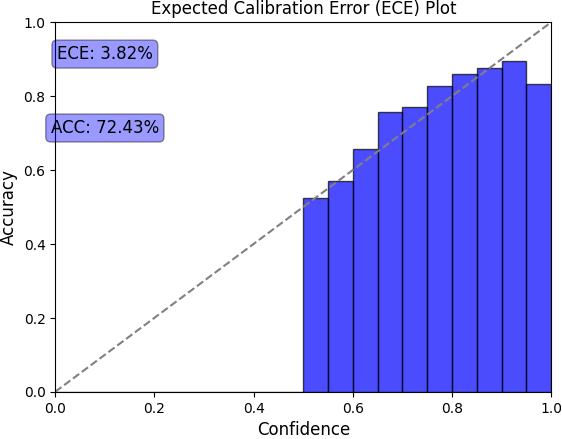}
        \end{minipage}
        \begin{minipage}{0.235\textwidth}
            \centering
            \includegraphics[width=1.\textwidth]{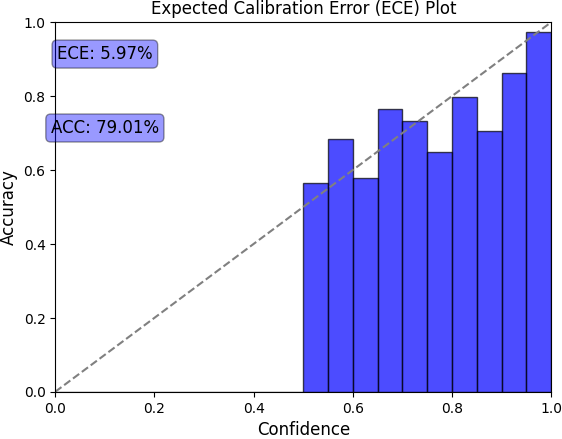}
        \end{minipage}
        \begin{minipage}{0.235\textwidth}
            \centering
            \includegraphics[width=1.\textwidth]{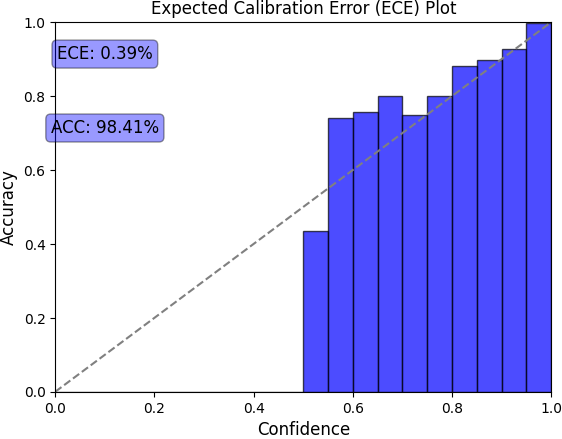}
        \end{minipage}
    }
    \caption{Comparison of calibration results between the baseline (top) and energy discrepancy (bottom) on varying datasets. Left to right: Bank, Cardio, Churn, Mushroom.}
    \label{fig:calibration-appendix}
\end{figure}

\subsection{Tabular Data Synthesising}  \label{appendix-sec-tabular}

\textbf{Experimental Details for the Synthetic Dataset.}
For the synthetic dataset, we parametrise the energy function using three MLP layers with $256$ hidden states and Swish activation.
To handle mixed data types, we transform each categorical feature into a $4$-dimensional embedding using a linear layer, and then concatenate these embeddings with the numerical features as input.
To train the EBM, we apply the Adam optimiser with a learning rate of $0.0001$ and a batch size of $128$. We update the parameters over $1,000$ epochs, with each epoch consisting of $100$ update iterations. For ED, we set $w=1, M=32$, using Gaussian perturbation for the numerical features and grid perturbation for the categorical features. For CD, we incorporate the replay buffer strategy and employ Langevin dynamics and Gibbs sampling with $50$ rounds (totalling $50 * S$ steps) for the numerical and categorical features, respectively.

\textbf{Experimental Details for the Real-world Dataset.}
\cref{tbl:appendix-dataset-stat} summarises the statistical properties of the datasets.
To parameterise the energy function and handle mixed data types, we use the same approach but with $1024$ hidden units instead of $256$. We train the model using the AdamW optimiser \citep{loshchilov2017decoupled} with a learning rate of $0.0001$ and a weight decay rate of $0.0005$. The model is trained for $20,000$ update steps with a batch size of $4096$.
For ED, Gaussian perturbation is employed for numerical features, while categorical features undergo different perturbation methods.
Specifically, TabED-Uni and TabED-Grid use uniform and grid perturbations with $t=0.1$, respectively. For TabED-Cyc and TabED-Ord, corresponding to cyclical and ordinal perturbations, quadratic scaling is applied with $t$ chosen from the best performance in $\{0.01, 0.005, 0.001\}$. Moreover, CD utilises the same algorithm as in the synthetic dataset, but with $10$ steps for short-run MCMC.
The reported results are averaged over $10$ randomly sampled synthetic data.

\textbf{Experimental Details for Calibration.}
Let $y$ and $\x$ be the target label and the rest features in the tabular data, we can transform a learned EBM $U_\theta (\x, y)$ into a deterministic classifier: $p_{\text{EBM}}(y|\x) \propto \exp(-U_\theta (\x, y))$. As a baseline for comparison, we additionally train a classifier $p_{\text{CLF}}(y|\x)$ with the same architecture by maximising the conditional likelihood: $\mathbb{E}_{p_\data} [\log p_{\text{CLF}}(y|\x)]$. In particular, we utilise the Adam optimiser with a learning rate of $0.001$ and a batch size of $4096$ to train the classifier $p_{\text{CLF}}$. The model undergoes training for $50$ epochs.

\textbf{Additional Results for Calibration.}
\cref{fig:calibration-appendix} presents additional calibration results across different datasets. It shows that the energy-based classifier learned by energy discrepancy exhibits superior calibration compared to the deterministic classifier, except for the Mushroom dataset, where the deterministic classifier achieves $100\%$ accuracy, resulting in low calibration error.

\textbf{Additional Results with Other Metrics.}
We evaluate our methods against baselines using two additional metrics: single-column density similarity and pair-wise correlation similarity. These metrics assess the similarity in the empirical distribution of individual columns and the correlations between pairs of columns in the generated versus real tables. Both metrics can be computed using the open-source \href{https://docs.sdv.dev/sdmetrics/reports/quality-report/single-table-api}{SDMetrics} API. As shown in \cref{tbl:exp-density-similarity}, the result shows that the proposed ED-based approaches either outperform or achieve comparable performance to the baselines across most datasets.

\begin{table}[!t] 
    \centering
    \caption{Results on density similarity between the synthesis and real tabular data.}
    \label{tbl:exp-density-similarity}
    \small
    \setlength{\tabcolsep}{1.mm}{
    \scalebox{.9}{
    \begin{threeparttable}
    {
        \begin{tabular}{lcccccccccccc}
            \toprule[0.8pt]
            \multirow{2}{*}{Methods} & \multicolumn{6}{c}{\textbf{Single-column Density Similarity} $\uparrow$ } & \multicolumn{6}{c}{\textbf{Pair-wise Correlation Similarity} $\uparrow$ } \\
            \cmidrule(lr){2-7}  \cmidrule(lr){8-13}
             & {{Adult}} &{{Bank}} & {Cardio} & {{Churn}} &   {{Mushroom}} & {{Beijing}} & {{Adult}} &{{Bank}} & {Cardio} & {{Churn}} &   {{Mushroom}} & {{Beijing}} \\
            \midrule 
            CTGAN & $.814$ & $.866$ & $.906$ & $.901$ & $.951$ & $.799$ & $.744$ & $.769$ & $.717$ & $.826$ & $.828$ & $.761$ \\
            TVAE  & $.783$ & $.824$ & $.892$ & $.899$ & $.965$ & $.711$ & $.669$ & $.772$ & $.687$ & $.808$ & $.919$ & $.618$ \\
            TabCD & $.719$ & $.790$ & $.824$ & $.845$ & $.618$ & $.799$ & $.522$ & $.600$ & $.629$ & $.710$ & $.428$ & $.761$ \\
            TabDDPM & $.988$ & $.998$ & $.992$ & $.976$ & $.987$ & $.980$ & $.975$ & $.894$ & $.870$ & $.953$ & $.962$ & $.946$ \\
            \midrule
            TabED-Uni & $.785$ & $.779$ & $.914$ & $.886$ & $.878$ & $.933$ & $.653$ & $.683$ & $.783$ & $.808$ & $.770$ & $.793$ \\
            TabED-Grid & $.751$ & $.766$ & $.945$ & $.846$ & $.951$ & $.951$ & $.583$ & $.768$ & $.829$ & $.764$ & $.842$ & $.842$ \\
            TabED-Cyc & $.778$ & $.826$ & $.937$ & $.834$ & $.969$ & $.751$ & $.636$ & $.703$ & $.810$ & $.755$ & $.860$ & $.685$ \\
            TabED-Ord & $.828$ & $.894$ & $.933$ & $.887$ & $.943$ & $.747$ & $.702$ & $.796$ & $.811$ & $.791$ & $.826$ & $.662$ \\
            TabED-Str & $.77$ & $.798$ & $-$ & $-$ & $-$ & $.892$ & $.632$ & $.660$ & $-$ & $-$ & $-$ & $.759$ \\
		\bottomrule[1.0pt] 
		\end{tabular}
    }
    \end{threeparttable}
    }
    }
\end{table}

\subsection{Discrete Image Modelling} \label{appendix-sec-discrete-image-modelling}

\begin{figure}[!t]
\centering
	\begin{tabular}{ccc}		
	   \includegraphics[width=0.3\textwidth]{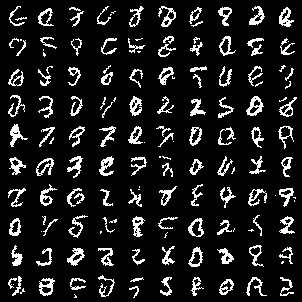}&
        \includegraphics[width=0.3\textwidth]{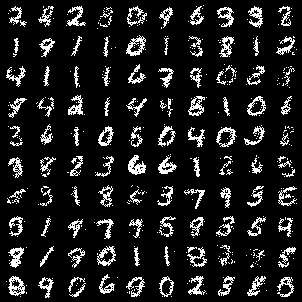}&
        \includegraphics[width=0.3\textwidth]{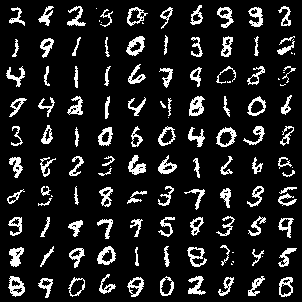}
	\end{tabular}
	\caption{Dynamic MNIST samples generated from the learned energy function using \EDG with various sampling methods. Left to right: GWG, GFlowNet, GFlowNet+GWG.}
	\label{fig:sample-ebm-appendix}
\end{figure} 

\textbf{Experimental Details.}
In this experiment, we parametrise the energy function using ResNet \citep{he2016deep} following the settings in \cite{grathwohl2021oops,zhang2022langevin}, where the network has $8$ residual blocks with $64$ feature maps. Each residual block has $2$ convolutional layers and uses Swish activation function \citep{ramachandran2017searching}. We choose $M=32, w=1$ for all variants of energy discrepancy, $\epsilon=0.001$ in Bernoulli perturbations. Note that here we choose a relatively small $\epsilon$ since we empirically find that the loss of energy discrepancy converges to a constant rapidly with larger $\epsilon$, which can not provide meaningful gradient information to update the parameters. All models are trained with Adam optimiser with a learning rate of $0.0001$ and a batch size of $100$ for $50,000$ iterations. We perform model evaluation every $5,000$ iteration by conducting Annealed Importance Sampling (AIS) with the GWG \citep{grathwohl2021oops} sampler for $10,000$ steps. The reported results are obtained from the model that achieves the best performance on the validation set. After training, we finally report the negative log-likelihood by running $300,000$ iterations of AIS. 

\textbf{Qualitative Results.}
To qualitatively assess the validity of the learned EBM, this study presents generated samples from the dynamic MNIST dataset. We first train an EBM using \EDG and then synthesise samples by employing various sampling methods, including: i) GWG \citep{grathwohl2021oops} with $1000$ steps; ii) GFlowNet with the same architecture and training procedure as per \cite{zhang2022generative}; and iii) GFlowNet followed by GWG with $100$ steps.

Empirically, we find that the quality of generated samples can be improved with more advanced sampling approaches. As depicted in \cref{fig:sample-ebm-appendix}, the GWG sampler suffers from mode collapse, leading to samples with similar patterns. In other hands, GFlowNet enhances the quality to some extent, but it produces noisy images. To address this issue, we apply GWG with $100$ steps following the GFlowNet. It can be seen that the resulting GFlowNet+GWG sampler yields the highest quality with clear digits.
These observations validate the capability of our energy discrepancies to accurately learn the energy landscape from high-dimensional datasets. 
We leave the development of a more advanced sampler in future work to further improve the quality of generated images using our energy discrepancy approaches.

\textbf{Time Complexity Comparison for Energy Discrepancy and Contrastive Divergence.}
Energy discrepancy offers greater training efficiency than contrastive divergence, as it does not rely on MCMC sampling.
In this experiment, we evaluate the running time per iteration and epoch for energy discrepancy and contrastive divergence in training a discrete EBM on the static MNIST dataset.
The experiments include contrastive divergence with varying MCMC steps and variants of energy discrepancy with a fixed value of $M=32$. The results, presented in \cref{tab:image_time_complexity}, highlight that \EDB and \EDG are the fastest options, as they do not involve gradient computations during training.

\textbf{Comparison to Contrastive Divergence with Different MCMC Steps.}
Considering the greater training efficiency of energy discrepancy over contrastive divergence, this study comprehensively compares these two methods with varying MCMC steps in contrastive divergence.
Specifically, we utilise the officially open-sourced implementation\footnote{\url{https://github.com/ruqizhang/discrete-langevin}} of DULA to conduct contrastive divergence training. 
As depicted in Table \ref{tab:image_gen_cd-n}, we find that energy discrepancy significantly outperforms contrastive divergence when employing a single MCMC step, and achieves performance comparable to CD-10. 
We attribute this superiority to the fact that CD-1 involves a biased estimation of the log-likelihood gradient due to inherent issues with non-convergent MCMC processes. In contrast, energy discrepancy does not suffer from this issue due to its consistent approximation.

\begin{table}[t]
\small
\setlength{\tabcolsep}{2.1mm}
\centering
\caption{Running time complexity comparison for energy discrepancy and contrastive divergence.}
\label{tab:image_time_complexity}
\begin{tabular}{l|ccccc}
\toprule 
Time $\backslash$ Method & CD-1 & CD-5 & CD-10 & \EDB & \EDG \\
\midrule 
Per Iteration (s) & $0.0583$ & $0.1904$ & $0.3351$ & $0.0905$ & $0.0872$ \\
Per Epoch (s) & $29.1660$ & $95.2178$ & $167.5718$ & $46.4317$ & $44.0621$ \\
\bottomrule
\end{tabular}
\end{table}

\begin{table}[t]
\small
\setlength{\tabcolsep}{1.mm}
\centering
\caption{Experimental results of the comparison between energy discrepancy and contrastive divergence with varying MCMC steps.}
\label{tab:image_gen_cd-n}
\begin{tabular}{l|ccccccccc}
\toprule 
Dataset $\backslash$ Method & CD-1 & CD-3 & CD-5 & CD-7 & CD-10 & \EDB & \EDG \\
\midrule 
Static MNIST & $182.53$ & $130.94$ & $102.70$ & $98.07$ & $\bf 88.13$ & $96.11$ & $90.61$\\
Dynamic MNIST & $157.14$ & $130.56$ & $97.50$ & $91.00$ & $\bf 84.16$ & $97.12$ & $90.19$\\
Omniglot & nan. & $161.96$ & $142.91$ & $149.68$ & $146.11$ & $97.57$ & $\bf 93.94$ \\
\bottomrule
\end{tabular}
\end{table}

\textbf{The Efficacy of the Number of Negative Samples.}
In all experiments, we choose the number of negative samples as $M=32$ irrespective of the dimension of the problem, to maximise computational efficiency within the constraints of our GPU capacity.
\begin{wraptable}{r}{0.5\linewidth}
    \small
    \centering
    \caption{Discrete image modelling results of \EDG on the static MNIST dataset with different $M$ and $w=1$.}
    \label{tab:image_diff_M}
    \vspace{-2mm}
    \begin{tabular}{l|cccc}
    \toprule 
     & $M=4$ & $M=8$ & $M=16$ & $M=32$ \\
    \midrule 
    NLL & $90.13$ & $90.37$ & $89.14$ & $90.61$ \\
    \bottomrule
    \end{tabular}
\vspace{-2mm}
\end{wraptable}
To investigate the impact of the number of negative samples on performance, we conduct experiments by training energy-based models on the static MNIST dataset with \EDG for different values of $M$.
As detailed in \cref{tab:image_diff_M}, our results maintain comparable quality even as the number of negative samples is decreased. Notably, our approach offers greater parallelisation potential compared to the sequentially computed MCMC of contrastive divergence.

\subsection{Training Ising Models} \label{appendix-sec-training-ising-models}

\begin{wrapfigure}{r}{0.50\linewidth}
\vspace{-4mm}
\centering
\includegraphics[width=.16\textwidth]{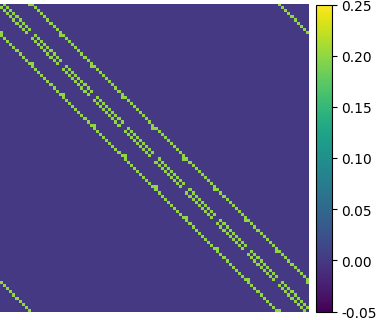}
\includegraphics[width=.16\textwidth]{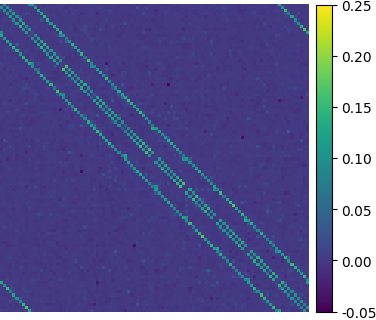}
\includegraphics[width=.16\textwidth]{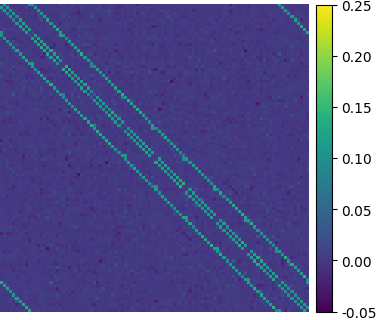}
\vspace{-2mm}
\caption{Results on learning Ising models. Left to right: ground truth, \EDB, \EDG.}
\label{fig:ising_learned_matrix}
\end{wrapfigure}
\textbf{Task Descriptions.} We further evaluate our methods for training the lattice Ising model, which has the form of
\begin{align}
    p(\x) \propto \exp (\x^T J \x), \  \x \in \{-1,1\}^D, \nonumber
\end{align}
where $J=\sigma A_D$ with $\sigma \in \mathbb{R}$ and $A_D$ being the adjacency matrix of a $D\times D$ grid.
Following \cite{grathwohl2021oops,zhang2022langevin,zhang2022generative}, we generate training data through Gibbs sampling and use the generated data to fit a symmetric matrix $J$ via energy discrepancy.
Note that the training algorithms do not have access to the data-generating matrix $J$, only to the collection of samples.

\textbf{Experimental Details.} As in \cite{grathwohl2021oops,zhang2022generative,zhang2022langevin}, we train a learnable connectivity matrix $J_\phi$ to estimate the true matrix $J$ in the Ising model. To generate the training data, we simulate Gibbs sampling with $1,000,000$ steps for each instance to construct a dataset of $2,000$ samples. For energy discrepancy, we choose $w=1,M=32$ for all variants, $\epsilon=0.1$ in Bernoulli perturbations. The parameter $J_\phi$ is learned by the Adam \citep{kingma2014adam} optimiser with a learning rate of $0.0001$ and a batch size of $256$. 
Following \cite{zhang2022generative}, all models are trained with an $l_1$ regularisation with a coefficient in $\{100, 50, 10, 5, 1, 0.1, 0.01\}$ to encourage sparsity. The other setting is basically the same as Section F.2 in \cite{grathwohl2021oops}. We report the best result for each setting using the same hyperparameter searching protocol for all methods.

\textbf{Qualitative Results.}
In \cref{fig:ising_learned_matrix}, we consider $D=10\times 10$ grids with $\sigma =0.2$ and illustrate the learned matrix $J$ using a heatmap. It can be seen that the variants of energy discrepancy can identify the pattern of the ground truth, confirming the effectiveness of our methods.

\begin{wraptable}{r}{0.6\linewidth}
\vspace{1.5mm}
    \small
    \centering
    \caption{Mean negative log-RMSE (higher is better) between the learned connectivity matrix $J_\phi$ and the true matrix $J$ for different values of $D$ and $\sigma$. The results of baselines are directly taken from \cite{zhang2022generative}.} 
    \label{tab:ising_results}
    \vspace{-2mm}
    \resizebox{\linewidth}{!}{
   \begin{tabular}{lccccccc}
        \toprule
        \multirow{2}{*}{Method $\backslash$ $\sigma$} & \multicolumn{5}{c}{$D=10^2$} & \multicolumn{2}{c}{$D=9^2$} \\
        \cmidrule(lr){2-6}\cmidrule(lr){7-8}
          & $0.1$ & $0.2$ & $0.3$ & $0.4$ & $0.5$ & $-0.1$ & $-0.2$  \\ 
         \midrule
        Gibbs & $4.8$ & $4.7$ & $\bf 3.4$ & $\bf 2.6$ & $\bf 2.3$ & $4.8$ & $4.7$ \\
        GWG & $4.8$ & $4.7$ & $\bf 3.4$ & $\bf 2.6$ & $\bf 2.3$ & $4.8$ & $4.7$ \\
        EB-GFN & $\bf 6.1$ & $\bf 5.1$ & $3.3$ & $\bf 2.6$ & $\bf 2.3$  & $\bf 5.7$ & $\bf 5.1$ \\
        \midrule
        \EDB & $5.1$ & $4.0$ & $2.9$ & $2.5$ &  $\bf 2.3$ & $5.1$ & $4.3$ \\
        \EDG & $4.6$ & $4.0$ & $3.1$ & $\bf 2.6$ & $\bf 2.3$ & $4.5$ & $4.0$ \\
        \bottomrule
    \end{tabular}
    }
\vspace{-2mm}
\end{wraptable}
\textbf{Quantitative Results.}
In the quantitative comparison to the baselines,
we consider $D=10\times 10$ grids with $\sigma = 0.1, 0.2, \dots, 0.5$ and $D=9\times 9$ grids with $\sigma=-0.1, -0.2$. The methods are evaluated by computing the negative log-RMSE between the estimated $J_\phi$ and the true matrix $J$. As shown in \cref{tab:ising_results}, our methods demonstrate comparable results to the baselines and, in certain settings, even outperform Gibbs and GWG, indicating that energy discrepancy is able to discover the underlying structure within the data.

\subsection{Graph Generation} \label{appendix-sec-graph-generation}

\textbf{Task Descriptions.}
The efficacy of our methods can be further demonstrated by producing high-quality graph samples. Following the setting in \cite{you2018graphrnn}, our model is evaluated on the Ego-small dataset, which comprises one-hop ego graphs extracted from the Citeseer network  \citep{sen2008collective}.
We consider the following baselines\footnote{There is insufficient information to reproduce EBM (GwG) and RMwGGIS precisely from \cite{liu2023RMwGGIS}. We reran these two baselines with controlled hyperparameters for a fair comparison, while other baseline results were taken from their original papers.} in graph generation, including GraphVAE \citep{simonovsky2018graphvae}, DeepGMG \citep{li2018learning}, GraphRNN \citep{you2018graphrnn}, GNF \citep{liu2019graph}, GrappAF \citep{shi2020graphaf}, GraphDF \citep{luo2021graphdf}, EDP-GNN \citep{niu2020permutation}, RMwGGIS \citep{liu2023RMwGGIS}, and contrastive divergence with GWG sampler \citep{grathwohl2021oops}.

\begin{wraptable}{R}{0.45\linewidth}
	\caption{Graph generation results in terms of MMD. \textit{Avg.} denotes the average over three MMD results.}
	\label{tab:gg-quantitative}
        \vspace{-2mm}
	\centering
	\resizebox{1.0\linewidth}{!}{
	\begin{tabular}{lccc|c}
		\toprule
		\textbf{Method} & \textit{Degree} &\textit{Cluster} &\textit{Orbit} &\textit{Avg.}  \\
		\midrule
	    GraphVAE & $0.130$ & $0.170$ & $0.050$ & $0.117$ \\
	    DeepGMG & $0.040$ & $0.100$ & $0.020$ & $0.053$ \\
	    GraphRNN & $0.090$ & $0.220$ & $0.003$ & $0.104$ \\
	    GNF & ${0.030}$ & $0.100$ & $0.001$ & $0.044$ \\
	    GraphAF & ${0.030}$ & $0.110$ & ${0.001}$ & $0.047$ \\
	    GraphDF & $0.040$ & $0.130$ & $0.010$ & $0.060$ \\
	    EDP-GNN & $0.052$ & $0.093$ & $0.007$ & $0.050$ \\
	   EBM (GWG) & $0.095$ & $0.061$ & $0.032$ & $0.063$ \\
	    RMwGGIS & $0.066$ & $0.042$ & $0.036$ & $0.048$ \\
	    \midrule
            \EDB & $0.063$ & $0.054$ & $0.014$ & $0.044$ \\
            \EDG & $0.036$ & $0.050$ & $0.019$ & $\mathbf{0.035}$ \\
		\bottomrule
	\end{tabular}
	}
	\vspace{-3mm}
\end{wraptable}
\textbf{Experimental Details.}
Following the setup in \cite{you2018graphrnn}, we split the Ego-small dataset, allocating $80\%$ for training and the remaining $20\%$ for testing. To provide better insight into this task, we illustrate a subset of training data in \cref{fig:gg-training-data}. Notably, these training data examples closely resemble realistic one-hop ego graphs.

For a fair comparison, we parametrise the energy function via a 5-layer GCN \citep{kipf2016semi} with the ReLU activation and 16 hidden states for all energy-based approaches. For hyperparameters, we choose $M=32, w=1$ for all variants of energy discrepancy and $\epsilon=0.1$ for the Bernoulli perturbation. Following the configuration in \cite{liu2023RMwGGIS}, we apply the advanced version of RMwGGIS with the number of
samples $s=50$ \cite[Equation 11]{liu2023RMwGGIS}. 
Regarding the EBM (GWG) baseline, we train it using persistent contrastive divergence with a buffer size of $200$ samples and the MCMC steps being $50$. 
To train the models, we use the Adam optimiser with a learning rate of $0.0001$ and a batch size of $200$.
After training, we generate new graphs by first sampling $N$, which is the number
of nodes to be generated, from the empirical distribution of the number of nodes in the training dataset, and then applying the GWG sampler \citep{grathwohl2021oops} with 50 MCMC steps from a randomly initialised Bernoulli noise. 
To assess the quality of these samples, we employ the MMD metric, evaluating it across three graph statistics, i.e., degrees, clustering coefficients, and orbit counts.
Following the evaluation scheme in \cite{liu2019graph}, We trained 5 separate models of each type and performed 3 trials per model, then averaged the result over 15 runs.

\textbf{Qualitative Results.}
We provide a visualisation of generated graphs from variants of our methods in \cref{fig:gg-samples-edb,fig:gg-samples-edg}. Notably, the majority of these generated graphs resemble one-hop ego graphs, illustrating their adherence to the graph characteristics in the training data.

\textbf{Quantitative Results.}
In \cref{tab:gg-quantitative}, we compare our methods to various baselines. It can be seen that our methods outperform most baselines in terms of the average of the three MMD metrics, indicating the faithful energy landscapes learned by the energy discrepancy approaches.

\begin{figure}[!t]
    \centering
    \begin{subfigure}{0.32\linewidth}
        \centering
        \includegraphics[width=1.8in]{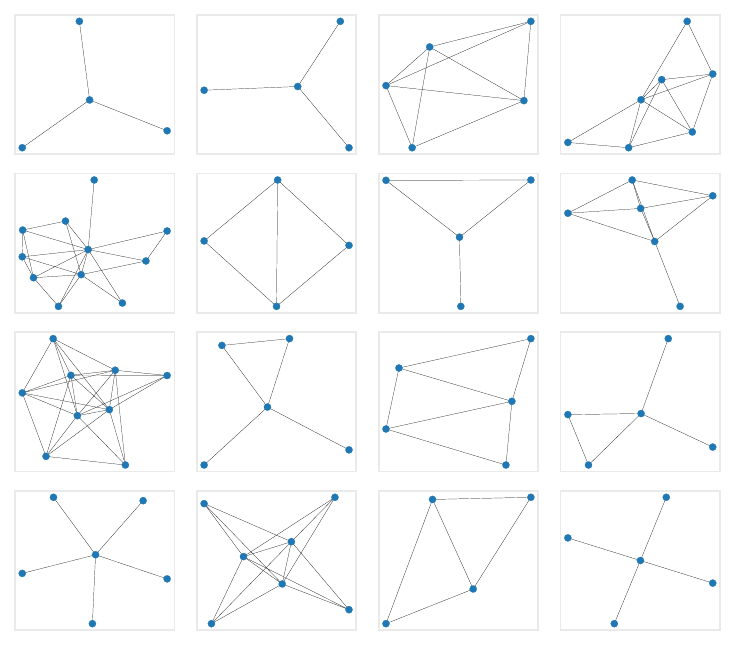}
        \vspace{-5mm}
        \caption{Training Data}
        \label{fig:gg-training-data}
    \end{subfigure}
    \hfill
    \begin{subfigure}{0.32\linewidth}
        \centering
        \includegraphics[width=1.8in]{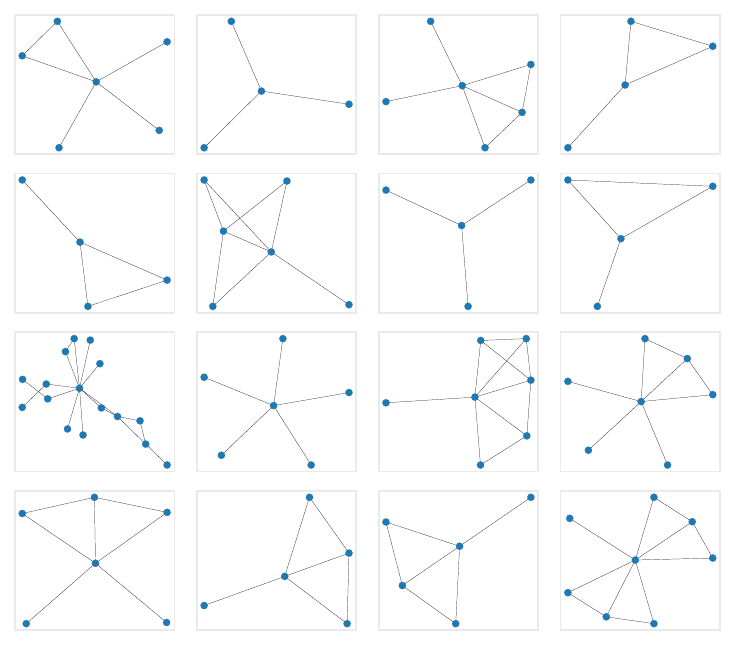}
        \vspace{-5mm}
        \caption{\EDB}
        \label{fig:gg-samples-edb}
    \end{subfigure}
    \hfill
    \begin{subfigure}{0.32\linewidth}
        \centering
        \includegraphics[width=1.8in]{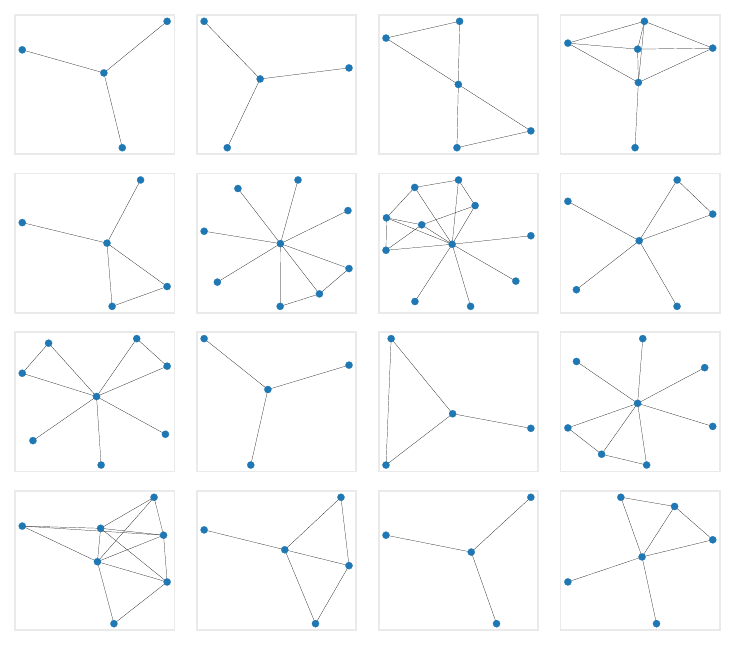}
        \vspace{-5mm}
        \caption{\EDG}
        \label{fig:gg-samples-edg}
    \end{subfigure}
    \caption{Visualisation of the training data and samples drawn from the energy-based models learned by the variants of our approaches on the Ego-small dataset.}
    \vspace{-3mm}
\end{figure}
\newpage
\section{Naming Conventions and Parameters of Introduced Methods}
This table summarises the naming conventions and available tuning parameters for all introduced methods. The structured perturbation TabED-Str uses different perturbations depending on the state space structure: On unstructured data, the uniform perturbation with tuning hyper-parameter $t_{\mathrm{cat}}$ is used, while on ordinally and cyclically structured data the ordinal perturbations and cyclical perturbations are used, respectively, with tuning parameter $t_{\mathrm{base}}$.
\begin{table*}[h!]
\small
    \caption{Overview of all introduced energy discrepancy methods}
    \centering
    \hspace{-2mm}
\begin{tabular}{l|c c c}
\toprule
    Name & Space (Discrete component) & Perturbation (Discrete component) & Tuning Parameter \\
\midrule
    \EDB & $\{0, 1\}^d$ & $\prod_{k=1}^d \mathrm{Bern}(\varepsilon)$ & $\varepsilon = 0.5(1-e^{-2t})$ \\
    \EDG & $\{0, 1\}^d$ & $\sum_{k= 1}^d \frac{1}{d}\delta_{\vert y_k - x_k\vert = 1}$ & None \\
    TabED-Uni & $\otimes_{k=1}^d\{1, \dots, S_k\}$ & $\prod_{k=1}^d \exp(tR^{\mathrm{unif}})_{y_k x_k}$ (\cref{equ:uniform_perturbation_estimation}) & $t >0$ \\
    TabED-Grid & $\otimes_{k=1}^d\{1, \dots, S_k\}$ & $\sum_{k=1}^d \frac{1}{d} \delta(y_k, \Box)\delta(\mathbf y_{\neg k}, \mathbf x_{\neg k})$ & None \\
    TabED-Cyc & $\otimes_{k=1}^d\{1, \dots, S_k\}$ & $\prod_{k=1}^d \exp(t_kR^{\mathrm{cyc}})_{y_k x_k}$ (\cref{prop:SolutionHeatEquation}) & $t_k = S_k^2 t_{\mathrm{base}}$ \\
    TabED-Ord & $\otimes_{k=1}^d\{1, \dots, S_k\}$ & $\prod_{k=1}^d \exp(t_kR^{\mathrm{ord}})_{y_k x_k}$ (\cref{prop:SolutionHeatEquation}) & $t_k = S_k^2 t_{\mathrm{base}}$\\
    TabED-Str & $\otimes_{k=1}^d\{1, \dots, S_k\}$ & $\prod_{k=1}^d \exp(t_kR^k)_{y_k x_k}$ (Mixed) & Mixed\\
    \bottomrule
\end{tabular}
\end{table*}


\end{document}